\setlist{itemsep=1pt}
\providecommand{\tabularnewline}{\\}
\theoremstyle{definition}
\newtheorem{problem}{Problem}
\theoremstyle{plain}
\newtheorem{thm}{Theorem}
\theoremstyle{remark}
\theoremstyle{plain}
\theoremstyle{plain}\newtheorem{defn}{Definition}
\global\long\def\tn{\tilde{N}}
\global\long\def\b{\mathrm{B}}
\global\long\def\d{\mathbf{D}}
\global\long\def\argmax{\operatorname{arg\,max}}
\global\long\def\alg{\textsc{Pulse}\xspace}
\global\long\def\p{\mathrm{Pr}}
\global\long\def\td{\tilde{d}}
\global\long\def\t{\tau}
\global\long\def\av{\mathrm{Avail}}
\global\long\def\e{\mathbb{E}}
\global\long\def\given{\middle\vert}
\newcommand{\er}{Erd\H{o}s-R\'enyi }
\title{Estimating the Size of a Large Network and its Communities from a Random Sample}
\author{
	Lin Chen \\
	Department of Electrical Engineering\\
	Yale University\\
	\texttt{lin.chen@yale.edu} \\
	\and Amin Karbasi\\
	Department of Electrical Engineering\\
	Yale University\\
	\texttt{amin.karbasi@yale.edu} \\
	\and Forrest W.~Crawford\\
	Department of Biostatistics\\
	Yale University\\
	\texttt{forrest.crawford@yale.edu} 
}
\begin{document}

\maketitle

\begin{abstract}
	Most real-world networks are too large to be measured or studied directly and there is substantial interest in estimating global network properties from smaller sub-samples.  One of the most important global properties is the number of vertices/nodes in the network. Estimating the number of vertices in a large network is a major challenge in computer science, epidemiology, demography, and intelligence analysis.  In this paper we consider a population random graph $G=(V,E)$ from the stochastic block model (SBM) with $K$ communities/blocks.  A sample is obtained by randomly choosing a subset $W\subseteq V$ and letting $G(W)$ be the induced subgraph in $G$ of the vertices in $W$. In addition to $G(W)$, we observe the total degree of each sampled vertex and its block membership.  Given this partial information, we propose an efficient  PopULation Size Estimation algorithm,  called \alg, that correctly estimates the size of the whole population as well as the size of each community. To  support our theoretical analysis, we perform an exhaustive set of experiments to study the effects of sample size, $K$, and SBM model parameters on the accuracy of the estimates. The experimental results also demonstrate that \alg significantly outperforms a widely-used method called the network scale-up estimator in a wide variety of scenarios.  We conclude with extensions and directions for future work.
	
	%
	%
	%
	%
\end{abstract}


\section{Introduction}
Many real-world networks cannot be studied directly because they are obscured in some way, are too large, or are too difficult to measure. There is therefore a great deal of interest in estimating properties of large networks via sub-samples \cite{maiya2011benefits}. One of the most important properties of a large network is the number of vertices it contains. Unfortunately census-like enumeration of all the vertices in a network is often impossible, so researchers must try to learn about the size of real-world networks by sampling smaller components.  In addition to the size of the total network, there is great interest in estimating the size of different \emph{communities} or sub-groups from a sample of a network.  Many real-world networks exhibit community structure, where nodes in the same community have denser connections than those in different communities~\cite{girvan2002community,newman2004finding,newman2006modularity}.  In the following examples, we describe network size estimation problems in which only  a small subgraph of a larger network is observed.

\textbf{Social networks.} The social and economic value of an online social network (e.g. Facebook, Instagram, Twitter) is closely related to the number of users the service has.  When a social networking service does not reveal the true number of users, economists, marketers, shareholders, or other groups may wish to estimate the number of people who use the service based on a sub-sample \cite{bernstein2013quantifying}.  

\textbf{World Wide Web.} Pages on the World-Wide Web can be classified into several categories (e.g. academic, commercial, media, government, etc.).
Pages in the same category tend to have more connections. Computer scientists have developed crawling methods for obtaining a sub-network of web pages, along with their hyperlinks to other unknown pages. Using the crawled sub-network and hyperlinks, they can estimate the number of pages of a certain category~\cite{murray2000sizing,massoulie2006peer, ribeiro2010estimating, katzir2011estimating, papagelis2013sampling}. 

\textbf{Size of the Internet.} The number of computers on the Internet (the size of the Internet) is of great interest to computer scientists. However, it is impractical to access and enumerate all computers on the Internet and only a small sample of computers and the connection situation among them can be accessible~\cite{xing2003measuring}.  

\textbf{Counting terrorists.} Intelligence agencies often target a small number of suspicious or radicalized individuals to learn about their communication network. But agencies typically do not know the number of people in the network.  The number of elements in such a covert network might indicate the size of a terrorist force, and would be of great interest~\cite{crawford2015hidden}.

\textbf{Epidemiology.} Many of the groups at greatest risk for HIV infection (e.g. sex workers, injection drug users, men who have sex with men) are also difficult to survey using conventional methods. 
Since members of these groups cannot be enumerated directly, researchers often trace social links to reveal a network among known subjects. Public health and epdiemiological interventions to mitigate the spread of HIV rely on knowledge of the number of HIV-positive people in the population~\cite{kadushin2006scale,guo2013estimating,salganik2011assessing,shelley1995knows,shelley2006knows,shokoohi2012size,ezoe2012population}.  

\textbf{Counting disaster victims.} After a disaster, it can be challenging to estimate the number of people affected. When logistical challenges prevent all victims from being enumerated, a random sample of individuals may be possible to obtain \cite{bernard1988many,bernard2001estimating}.




In this paper, we propose a novel method called $\alg$ for estimating the number of vertices and the size of individual communities from a random sub-sample of the network. We model the network as an undirected simple graph $G=(V,E)$, and we treat $G$ as a realization from the stochastic blockmodel (SBM), a widely-studied extension of the \er random graph model \cite{renyi1959random} that
accommodates community structures in the network by mapping each vertex into one of $K\geq 1$ disjoint types or communities.  We construct a sample of the network by choosing a sub-sample of vertices $W\subseteq V$ uniformly at random without replacement, and forming the induced subgraph $G(W)$ of $W$ in $G$.  We assume that the block membership and total degree $d(v)$ of each vertex $v\in W$ are observed.  We propose a Bayesian esitmation alogrithm $\alg$ for $N=|V|$, the number of vertices in the network, along with the number of vertices $N_i$ in each block.  We first prove important regularity results for the posterior distribution of $N$.  Then we describe the conditions under which relevant moments of the posterior distribution exist.  
We evaluate the performance of $\alg$ in comparison with the popular ``network scale-up'' method (NSUM) \cite{kadushin2006scale,guo2013estimating,salganik2011assessing,shelley1995knows,shelley2006knows,shokoohi2012size,ezoe2012population,killworth1998estimation}. We show that while NSUM is asymptotically unbiased, it suffers from serious finite-sample bias and large variance.
We show that $\alg$ has superior performance -- in terms of relative error and variance -- over NSUM in a wide variety of model and observation scenarios. Proofs are given in the appendix.



\begin{figure}
	\centering
	\includegraphics[width=0.7\columnwidth]{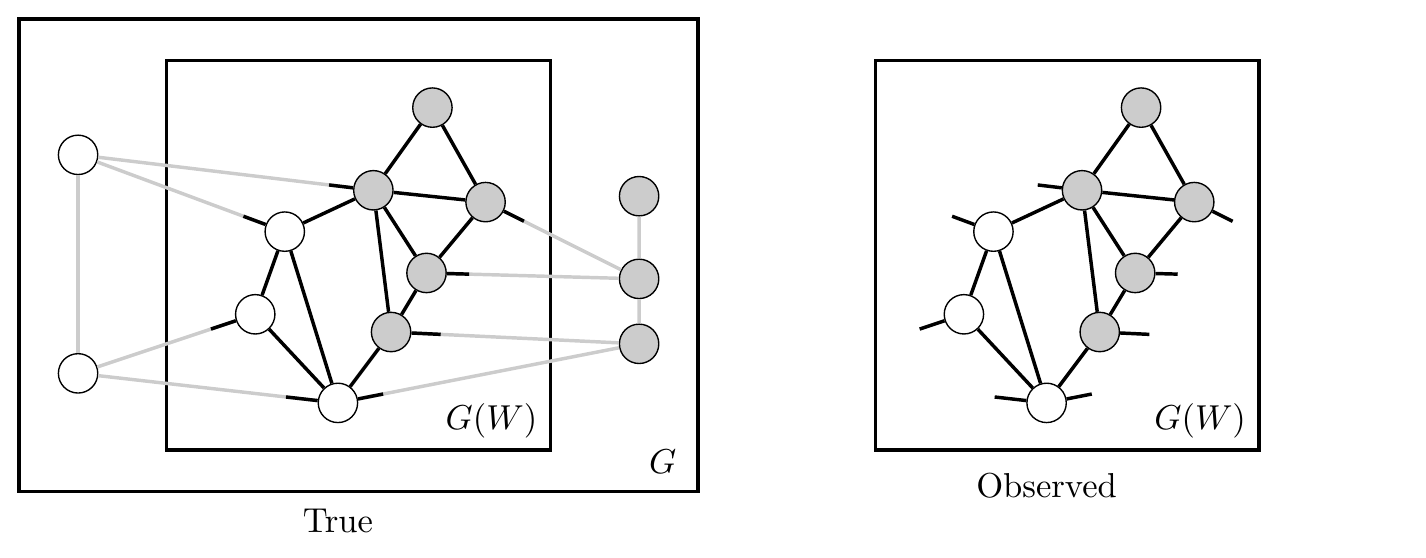}
	\caption{\small Illustration of the vertex set size estimation problem with $N=13$ and $K=2$. 
		White vertices are type-$1$ and gray are type-$2$.
		%
	}
	\label{fig:Problem}
\end{figure}

\section{Problem Formulation}
\vspace{-3mm}
\label{sec:problem-formuation}


The stochastic blockmodel (SBM) is a random graph model that generalizes the \er random graph \cite{renyi1959random}.  Let $G=(V,E)\sim G(N,K,p,t)$ be a realization from an SBM, where $N=|V|$ is the total number of vertices, the vertices  are divided into $K$ types indexed $1,\ldots,K$, specified by the map $t:V\to\{1,\ldots,K\}$, and a type-$i$ vertex and a type-$j$ vertex are connected independently with probability $p_{ij}\in[0,1]$.  Let $N_i$ be the number of type-$i$ vertices in $G$, with $N=\sum_{i=1}^{K}N_{i}$.  The degree of a vertex $v$ is $d(v)$. An edge is said to be of type-$(i,j)$ if it connects a type-$i$ vertex and a type-$j$ vertex.  A random induced subgraph is obtained by sampling a subset $W\subseteq V$ with $|W|=n$ uniformly at random without replacement, and forming the induced subgraph, denoted by $G(W)$. 
Let $V_{i}$ be the number of type-$i$ vertices in the sample and $E_{ij}$ be the number
of type-$(i,j)$ edges in the sample. For a vertex $v$ in the sample,
a \emph{pendant} edge connects vertex $v$ to a vertex outside the sample. 
Let $\tilde{d}(v)=d(v)-\sum_{w\in W}1\{\{w,v\}\in E\}$
be the number of pendant edges incident to $v$. Let $y_{i}(v)$ be
the number of type-$(t(v),i)$ pendant edges of vertex $v$; i.e.,
$y_{i}(v)=\sum_{w\in V\setminus W}1\{t(w)=i,\{w,v\}\in E\}$. We have
$\sum_{i=1}^{K}y_{i}(v)=\tilde{d}(v)$. Let $\tilde{N}_{i}=N_{i}-V_{i}$
be the number of type-$i$ nodes outside the sample. For the ease of presentation we  define $\tn=(\tn_{i}:1\leq i\leq K)$,
$p=(p_{ij}:1\leq i<j\leq K)$, and $y=(y_{i}(v):v\in W,1\leq i\leq K)$.  We observe only $G(W)$ and the total degree $d(v)$ of each vertex $v$ in the sample.
We assume that we know the type of each vertex in the sample.
The observed data $\mathbf{D}$ consists of $G(W)$,
$(d(v):v\in W)$ and $(t(v):v\in W)$; i.e., $\mathbf{D}=(G(W),(d(v):v\in W),(t(v):v\in W))$.
A \textbf{table of notation} is provided in the appendix. 



\begin{problem}
	\label{prob:main}
	Given the observed data $\d$, 
	estimate the size $N$ of the vertex set $N=|V|$ and the size of each community $N_i$.
\end{problem}
Fig. \ref{fig:Problem} illustrates the vertex set size estimation
problem.  White nodes are of type-$1$ and 
gray nodes are of type-$2$. 
All nodes outside $G(W)$ are unobserved.
We observe the types and the total degree of each vertex in the sample.
Thus we know the number of pendant edges that connect each vertex in the sample
to other, unsampled vertices.  However, the destinations of these pendant edges are unknown
to us. 



\section{Network Scale-Up Estimator}
\label{sub:nsu}


We briefly outline a simple and intuitive estimator
for $N=|V|$ that will serve as a comparison to \alg. 
The network scale-up method (NSUM) is a simple 
estimator for the vertex set size of \er
random graphs. It has been used in real-world
applications to estimate the size of hidden or hard-to-reach populations
such as drug users \cite{kadushin2006scale}, HIV-infected individuals
\cite{guo2013estimating,salganik2011assessing,shelley1995knows,shelley2006knows,shokoohi2012size},
men who have sex with men (MSM) \cite{ezoe2012population}, and homeless
people \cite{killworth1998estimation}. Consider a random graph that
follows the \er distribution. 
The expected sum of total degrees in a random sample $W$ of vertices is
$\e\left[\sum_{v\in W}d(v)\right]=n(N-1)p$.
The expected number of edges in the sample is 
$\e\left[E_{S}\right]=\binom{n}{2}p$, where  $E_{S}$ is the number of edges
within the sample. A simple estimator of the connection
probability $p$ is $ \hat{p}=E_{S}/\binom{n}{2}$.  Plugging $\hat{p}$ 
into into the moment equation and solving for $N$ yields 
$\hat{N}=1+(n-1)\sum_{v\in W}d(v)/2E_S$, often simplified to 
$\hat{N}_{NS}=n\sum_{v\in W}d(v)/2E_{S}$ 
\cite{kadushin2006scale,guo2013estimating,salganik2011assessing,shelley1995knows,shelley2006knows,shokoohi2012size,ezoe2012population,killworth1998estimation}. 
\begin{thm}\label{thm:nsu}\textbf{\emph{(Proof in Appendix)}}
	Suppose $G$ follows a stochastic blockmodel with edge probability $p_{ij}>0$
	for $1\leq i,j\leq K$. 
	For any sufficiently large sample size, the
	NSUM is positively biased and $\e[\hat{N}_{NS}|E_S>0]-N$ has an asymptotic lower bound 
	$\e[\hat{N}_{NS}|E_S>0]-N \gtrsim N/n - 1$,
	as $n$ becomes large, where for two sequences $\{a_n\}$ and $\{b_n\}$, $a_n\gtrsim b_n$ means that there exists a sequence $c_n$ such that $a_n\geq c_n\sim b_n$; i.e., $a_n\geq c_n$ for all $n$ and $\lim_{n\to \infty} c_n/b_n=1$. However, as sample size goes to infinity, the NSUM becomes asymptotically unbiased.
\end{thm}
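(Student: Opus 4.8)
The plan is to collapse the estimator into a single scalar ratio and bound it by conditioning on the sample composition. First I record the identity $\sum_{v\in W}d(v)=2E_S+P$, where $P=\sum_{v\in W}\td(v)$ is the total number of pendant edges: each edge internal to $W$ is counted twice in the degree sum while each pendant edge is counted once. Substituting into the definition of the estimator gives $\hat{N}_{NS}=n+nP/(2E_S)$, so the conditional bias splits cleanly as $\e[\hat{N}_{NS}\mid E_S>0]-N=(n-N)+\tfrac n2\,\e[P/E_S\mid E_S>0]$. Everything then reduces to producing a sharp lower bound on $\e[P/E_S\mid E_S>0]$.

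Second, I condition on the sample composition $V=(V_1,\dots,V_K)$. The structural fact that makes the argument go through is that, given $V$, the variable $E_S$ depends only on the edge indicators among pairs with both endpoints in $W$, whereas $P$ depends only on the indicators for pairs with exactly one endpoint in $W$; these two families of Bernoulli variables are disjoint, hence independent under the SBM. Since $\{E_S>0\}$ is a function of $E_S$ alone, conditional independence survives the conditioning and $\e[P/E_S\mid E_S>0,V]=\e[P\mid V]\,\e[1/E_S\mid E_S>0,V]$. Jensen's inequality applied to the convex map $x\mapsto1/x$ then gives $\e[1/E_S\mid E_S>0,V]\ge\p(E_S>0\mid V)/\e[E_S\mid V]$, strictly, because $E_S$ is nondegenerate---this strict gap is exactly what produces the positive bias.

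Third, I evaluate the conditional means $\e[P\mid V]=\sum_{i,j}V_i\tn_j p_{ij}$ and $\e[E_S\mid V]=\sum_i\binom{V_i}{2}p_{ii}+\sum_{i<j}V_iV_jp_{ij}$, which bounds the conditional bias below by $(n-N)+\tfrac n2\,\e[P\mid V]\,\p(E_S>0\mid V)/\e[E_S\mid V]$. Because every $p_{ij}>0$ forces $\e[E_S\mid V]\to\infty$, a Chernoff/union bound gives $\p(E_S>0\mid V)\to1$, and the same fact lets me trade the global conditioning on $\{E_S>0\}$ for integration against the unconditional, multivariate-hypergeometric law of $V$ at asymptotically vanishing cost (the reweighting $\p(E_S>0\mid V)/\p(E_S>0)$ tends to $1$ uniformly). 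Using the concentration $V_i/n\to N_i/N$ and simplifying the ratio $\e[P\mid V]/\e[E_S\mid V]$ then yields the stated bound; as a prototype, in the single-block case this ratio is exactly $2(N-n)/(n-1)$, so the bias equals $(N-n)/(n-1)\sim N/n-1$.

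The delicate step is the last one, and it is where I expect the main obstacle. The asserted bias $N/n-1$ is a lower-order quantity riding on top of the leading behaviour $\e[\hat{N}_{NS}\mid E_S>0]\approx N$, so naively replacing $V_i$ by its mean $(N_i/N)n$ inside the nonlinear ratio cancels the effect entirely; what survives is precisely the finite-sample discrepancy between $\binom{V_i}{2}$ and $V_i^2/2$. Retaining these correction terms while still controlling the expectation of the ratio over the fluctuating composition is the crux, and forces the hypergeometric concentration to be used quantitatively rather than only in the limit. The asymptotic-unbiasedness claim is then a free consequence: the bias is of order $N/n$, which is negligible relative to $N$, so the relative bias vanishes as $n\to\infty$, and when the sampling fraction $n/N\to1$ the absolute bound $N/n-1\to0$ as well---indeed at $n=N$ there are no pendant edges, $P=0$, and $\hat{N}_{NS}=N$ identically.
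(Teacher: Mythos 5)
Your lower-bound argument follows essentially the same route as the paper: the degree identity $\sum_{v\in W}d(v)=2E_S+P$, the independence of the internal-edge count and the pendant-edge count, and Jensen's inequality applied to $x\mapsto 1/x$ to get $\e[1/E_S\mid E_S>0]\ge \p(E_S>0)/\e[E_S]$. You are in fact more careful than the paper about \emph{where} the independence holds (only conditionally on the sample composition $V$), but you then leave the resulting integration over the hypergeometric law of $V$ as an acknowledged ``crux'' rather than carrying it out. The paper avoids this entirely by working with unconditional expectations: by exchangeability of the uniform sample, $\e[P]=n(N-n)p$ and $\e[E_S]=\binom{n}{2}p$ exactly, where $p$ is the type-averaged edge probability $p=\sum_i\binom{N_i}{2}p_{ii}/\binom{N}{2}+\sum_{i<j}N_iN_jp_{ij}/\binom{N}{2}$, and the ratio $\tfrac n2\cdot\e[P]/\e[E_S]=(N-n)(1+\tfrac1{n-1})$ falls out with no concentration of $V$ needed. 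So the finite-sample correction you worry about losing is captured exactly by $\binom{n}{2}$ versus $n^2/2$ in the unconditional means; no quantitative hypergeometric concentration is required for the lower bound, only the elementary estimate $\p[E_S=0]<1/n$ for large $n$.

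The genuine gap is the asymptotic unbiasedness. You call it ``a free consequence'' because ``the bias is of order $N/n$,'' but you have only established a \emph{lower} bound on the bias; Jensen's inequality points the wrong way to give an upper bound on $\e[P/E_S\mid E_S>0]$, and nothing in your argument rules out $\e[1/E_S\mid E_S>0]$ being much larger than $1/\e[E_S]$ (the reciprocal is dominated by the lower tail of $E_S$, which Jensen says nothing about). The observation that $\hat N_{NS}=N$ identically at $n=N$ does not give convergence for $n<N$. The paper closes this with a separate concentration argument: it splits $\e[1/E_S\mid E_S>0]$ over the events $|E_S-\binom n2p|\le\delta$ and its complement, bounds the first piece by $1/(\binom n2p-\delta)$ and the second by Hoeffding's inequality, and with $\delta=n^{3/2}$ obtains $\e[1/E_S\mid E_S>0]\le 1/(\binom n2p-n^{3/2})+o(1)$, whence $\e[\hat N_{NS}\mid E_S>0]\to N$. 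Some such upper-tail (anti-concentration) control of $1/E_S$ is indispensable and is missing from your proposal.
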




\section{Main Results}

NSUM uses only aggregate information about the sum of the total degrees of vertices in the
sample and the number of edges in the sample. 
We propose a novel 
algorithm, $\alg$, that uses 
individual degree, vertex type, and the network structure information.
Experiments (Section~\ref{sec:Experiment}) show that it outperforms NSUM 
in terms of both bias and variance.

%

\label{sec:main result}






Given $p=(p_{ij}:1\leq i<j\leq K)$, the conditional likelihood of
the edges in the sample is given by 
\[
L_{W}(\mathbf{D};p)=\left(\prod_{1\leq i<j\leq K}p_{ij}^{E_{ij}}(1-p_{ij})^{V_{i}V_{j}-E_{ij}}\right)
\times\left(\prod_{i=1}^{K}p_{ii}^{E_{ii}}(1-p_{ii})^{\binom{V_{i}}{2}-E_{ii}}\right),
\]
and the conditional likelihood of the pendant edges is  given by
\[
L_{\neg W}(\mathbf{D};p)=\prod_{v\in W}\sum_{y(v)}\prod_{i=1}^{K}\binom{\tn_{i}}{y_{i}(v)}p_{i,t(v)}^{y_{i}(v)}(1-p_{i,t(v)})^{\tn_{i}-y_{i}(v)},
\]
where the sum is taken over all $y_{i}(v)$'s ($i=1,2,3,\ldots,K$)
such that $y_{i}(v)\geq0,\forall1\leq i\leq K$ and $\sum_{i=1}^{K}y_{i}(v)=\tilde{d}(v)$.
Thus the total conditional likelihood is 
$
L(\mathbf{D};p)=L_{W}(\d;p)L_{\neg W}(\d;p).
$

If we condition on $p$ and $y$, the likelihood of the edges within
the sample is the same as $L_{W}(\d;p)$ since it does not rely on
$y$, while the likelihood of the pendant edges given $p$ and $y$
is
\[
L_{\neg W}(\d;p,y)=\prod_{v\in W}\prod_{i=1}^{K}\binom{\tn_{i}}{y_{i}(v)}p_{i,t(v)}^{y_{i}(v)}(1-p_{i,t(v)})^{\tn_{i}-y_{i}(v)}.
\]
Therefore the total likelihood conditioned on $p$ and $y$ is given
by 
$
L(\d;p,y)=L_{W}(\d;p)L_{\neg W}(\d;p,y).
$
The conditional likelihood $L(\d;p)$ is indeed a function of $\tn$.
We may view this as the likelihood of $\tilde{N}$ given the data
$\d$ and the probabilities $p$; i.e.,  $L(\tn;\d,p)\triangleq L(\d;p)$.
Similarly, the likelihood $L(\d;p,y)$ conditioned on $p$ and $y$
is a function of $\tn$ and $y$. It can be viewed as the joint likelihood
of $\tn$ and $y$ given the data $\d$ and the probabilities $p$;
i.e.,  $L(\tn,y;\d,p)\triangleq L(\d;p,y)$, and $\sum_{y}L(\tn,y;\d,p)=L(\tn;\d,p)$,
where the sum is taken over all $y_{i}(v)$'s, $v\in W$ and $1\leq i\leq K$,
such that $y_{i}(v)\geq0$ and $\sum_{i=1}^{K}y_{i}(v)=\tilde{d}(v)$,
$\forall v\in W$, $\forall1\leq i\leq K$. To have a full Bayesian approach, we assume that the joint prior distribution
for $\tn$ and $p$ is $\pi(\tn,p)$.  
Hence, the population size estimation problem is equivalent to the following optimization problem for $\tn$:
\begin{equation}\label{eq:opt}
	\hat{\tn}=\argmax\int L(\tn;\d,p)\pi(\tn,p)dp.
\end{equation}
Then we estimate the total population size as $\hat{N}=\sum_{i=1}^{K}\hat{\tn}_{i}+|W|$.



We briefly study the regularity of the posterior distribution of $N$.
In order to learn about $\tilde{N}$, we must observe enough vertices from each block type, and enough edges connecting members of each block, so that the first and second moments of the posterior distribution exist.
Intuitively, in order for the first two moments to exist, either we must observe many edges connecting vertices of each block type, or we must have sufficiently strong prior beliefs about $p_{ij}$.
\begin{thm}
	\label{thm:regularity} \textbf{\emph{(Proof in Appendix)}}     Assume
	that $\pi(\tn,p)=\phi(\tn)\psi(p)$ and 
	$p_{ij}$ follows the Beta distribution
	$\b(\alpha_{ij},\beta_{ij})$ independently for $1\leq i<j\leq K$.
	Let 
	$\lambda=\min_{1\leq i\leq K}\left(\sum_{j=1}^{K}(E_{ij}+\alpha_{ij})\right).$
	If $\phi(\tn)$ is bounded 
	and $\lambda>n+1$, then the $n$-th moment of $N$ exists. 
\end{thm}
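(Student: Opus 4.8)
The plan is to reduce the existence of the $n$-th moment to convergence of an explicit series and then to bound its tail. Since $N=\sum_{i=1}^{K}\tn_i+n$ and $\phi$ is bounded, writing $m(\tn)=\int L(\tn;\d,p)\psi(p)\,dp$ for the marginal likelihood, it suffices to show $\sum_{\tn\in\mathbb{Z}_{\ge0}^{K}}\bigl(\sum_i\tn_i\bigr)^{n}\,m(\tn)<\infty$. Expanding $\bigl(\sum_i\tn_i\bigr)^{n}$ by the multinomial theorem, this in turn reduces to proving $\sum_{\tn}\prod_i\tn_i^{k_i}\,m(\tn)<\infty$ for each multi-index $k$ with $|k|=n$; note that $k_i\le n$ for every $i$, which is where the threshold $n+1$ will enter.

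First I would put $m(\tn)$ in closed form. Using the identity $\sum_{y}L(\tn,y;\d,p)=L(\tn;\d,p)$ recorded above and the fact that, for fixed $y$, the integrand $L(\tn,y;\d,p)\psi(p)$ factorizes across the independent coordinates $p_{ij}$, I would interchange the (finite) sum over $y$ with the integral and evaluate each one-dimensional Beta integral, obtaining
\[
m(\tn)=\sum_{y}C_y(\tn)\prod_{i\le j}\frac{\b(e_{ij}+\alpha_{ij},\,g_{ij}+\beta_{ij})}{\b(\alpha_{ij},\beta_{ij})},
\]
where $\b(\cdot,\cdot)$ is the Beta function, $C_y(\tn)=\prod_{v\in W}\prod_i\binom{\tn_i}{y_i(v)}$, and $e_{ij},g_{ij}$ are the total exponents of $p_{ij}$ and $1-p_{ij}$ accumulated from $L_W$ and $L_{\neg W}(\d;p,y)$. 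The structural facts I would record are that the number of admissible $y$ is finite and independent of $\tn$ (it is controlled by the observed $\td(v)$), that each $e_{ij}$ stays bounded, and that $g_{ij}$ grows linearly, with $g_{ii}\sim V_i\tn_i$ and $g_{ij}\sim V_j\tn_i+V_i\tn_j$ for $i<j$.

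Next I would extract the decay rate coordinate by coordinate. As $\tn\to\infty$ one has $C_y(\tn)\sim c_y\prod_i\tn_i^{m_i}$ with $m_i=\sum_{v}y_i(v)$, while standard Gamma-function asymptotics give $\b(e_{ij}+\alpha_{ij},g_{ij}+\beta_{ij})=\Theta\bigl(g_{ij}^{-(e_{ij}+\alpha_{ij})}\bigr)$. Collecting the powers of a single coordinate $\tn_i$ and using the bookkeeping cancellation
\[
m_i-\sum_{j=1}^{K}\bigl(e_{ij}+\alpha_{ij}\bigr)=-\lambda_i-\sum_{j\ne i}z_{ji},\qquad \lambda_i=\sum_{j}(E_{ij}+\alpha_{ij}),
\]
where $z_{ji}=\sum_{v:t(v)=i}y_j(v)\ge0$ counts pendant edges and $\lambda_i$ is exactly the quantity minimized in the definition of $\lambda$, I would conclude that the summand decays like $\tn_i^{k_i-\lambda_i-\sum_{j\ne i}z_{ji}}$ in the $i$-th direction. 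Since $\lambda_i\ge\lambda>n+1\ge k_i+1$, each one-dimensional tail $\sum_{\tn_i}$ converges.

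The main obstacle will be converting these per-coordinate rates into convergence of the full $K$-dimensional series. The difficulty is that the off-diagonal factors $(1-p_{ij})^{g_{ij}}$ make $g_{ij}$ grow in $\tn_i$ and $\tn_j$ simultaneously, so after integrating out $p$ the summand does not factorize over coordinates and the iterated sums do not decouple. I would handle this by summing the coordinates one at a time (Tonelli), at each stage exploiting the homogeneity of $\prod_j g_{ij}^{-(e_{ij}+\alpha_{ij})}$ to track how summing out $\tn_i$ redistributes decay onto the remaining coordinates, and checking that every exponent encountered stays strictly below $-1$; the fact that it is the individual $k_i\le n$, rather than their sum, that must beat $\lambda_i-1$ is precisely what makes the hypothesis $\lambda>n+1$ the right one. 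Assembling the finitely many contributions over $y$ and over the multi-indices $k$ then yields $\e[N^n]<\infty$.
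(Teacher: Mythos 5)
Your route is essentially the paper's: fix $y$ (finitely many admissible values), integrate out $p$ coordinate-by-coordinate to get a product of Beta functions, use $\Gamma$-asymptotics (the paper cites Wendel's $\lim_{y\to\infty}\Gamma(x+y)/(y^x\Gamma(y))=1$, equivalent to your $\b(e+\alpha,g+\beta)=\Theta(g^{-(e+\alpha)})$), and cancel the polynomial growth of $\prod_v\prod_i\binom{\tn_i}{y_i(v)}$ against the $S$-terms in the Beta exponents. Your bookkeeping identity $m_i-\sum_j(e_{ij}+\alpha_{ij})=-\lambda_i-\sum_{j\neq i}z_{ji}$ is exactly the cancellation the paper performs (their $S_{ij}$ is your $z_{ji}$), and your multinomial expansion of $(\sum_i\tn_i)^n$ is a harmless variant of the paper's power-mean bound $N^n\le K^{n-1}\sum_i(\tn_i+|W|/K)^n$.

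The one place you stop short is the step you yourself flag as ``the main obstacle'': the off-diagonal factors $\bigl(\theta_{ij}+\tn_iV_j+\tn_jV_i\bigr)^{-\eta_{ij}}$ couple coordinates, and your proposed fix (iterated Tonelli summation, tracking how decay ``redistributes'' onto the remaining coordinates) is only a plan — carrying it out with several off-diagonal factors per coordinate would be genuinely messy, since summing out $\tn_1$ leaves a function of all remaining $\tn_j$ that no longer has the product form you started from. The paper closes this gap with a one-line elementary inequality: since $(\theta+2a)(\theta+2b)\le(\theta+a+b)^2$, one has
\[
\bigl(\theta_{ij}+\tn_iV_j+\tn_jV_i\bigr)^{-\eta_{ij}}\le\bigl(\theta_{ij}+2\tn_iV_j\bigr)^{-\eta_{ij}/2}\bigl(\theta_{ij}+2\tn_jV_i\bigr)^{-\eta_{ij}/2},
\]
which splits each coupled factor evenly between its two coordinates. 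After this the bound factorizes as $\prod_i\tn_i^{-(\lambda_i-n)}$ (up to constants), the $K$-dimensional sum becomes a product of one-dimensional sums, and $\lambda>n+1$ gives convergence. With that inequality inserted where you invoke Tonelli, your argument is complete and coincides with the paper's.
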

In particular, 
if $\lambda>3$, the variance
of $N$ exists. 
Theorem~\ref{thm:regularity} gives the minimum possible number of edges in the sample to make the posterior sampling meaningful. If the prior distribution of $p_{ij}$ is $\mathrm{Uniform}[0,1]$, then we need at least three edges incident on type-$i$ edges for all types $i=1,2,3,\ldots,K$ to guarantee the existence of the posterior variance.

\subsection{\er Model}
In order to better understand how \alg estimates the size of a general stochastic block-model we study the \er case where $K=1$, and all vertices are connected independently with probability $p$.
%
Let $N$ denote the total
population size, $W$ be the sample with size $|W|=V_{1}$ 
and $\tn=N-|W|$.
For each vertex $v\in W$ in the sample, let $\tilde{d}(v)=y(v)$
denote the number of pendant edges of vertex $v$, and
$E=E_{11}$  is the number of edges within the sample. Then
\begin{minipage}{.5\linewidth}
	\[  L_{W}(\d;p)=p^{E}(1-p)^{\binom{|W|}{2}-E},\]
\end{minipage}%
\begin{minipage}{.5\linewidth}
	\[L_{\neg W}(\d;p)=\prod_{v\in W}\binom{\tn}{\tilde{d}(v)}p^{\tilde{d}(v)}(1-p)^{\tn-\tilde{d}(v)}.\]
\end{minipage}
In the \er case, $y(v)=\tilde{d}(v)$ 
and thus $L_{\neg W}(\d;p)=L_{\neg W}(\d;p,y)$. Therefore,
the total likelihood of $\tn$ conditioned on $p$ is given by
\begin{small}
	\[
	L(\tn;\d,p) =  L_{W}(\d;p)L_{\neg W}(\d;p)\\
	=  p^{E}(1-p)^{\binom{|W|}{2}-E}\prod_{v\in W}\binom{\tn}{\tilde{d}(v)}p^{\tilde{d}(v)}(1-p)^{\tn-\tilde{d}(v)}.
	\]
\end{small}
We assume that $p$ has a beta prior $\b(\alpha,\beta)$ and that
$\tn$ has a prior $\phi(\tn)$. 
Let 
\[
L(\tn;\d)=\prod_{v\in W}\binom{\tn}{\tilde{d}(v)}\b(E+u+\alpha,\binom{|W|}{2}-E+|W|\tn-u+\beta),
\]
where $u=\sum_{v\in W}\tilde{d}(v)$. 
The posterior probability $\p[\tn|\d]$ is proportional to $\Lambda(\tn;\d)\triangleq \phi(\tn)L(\tn;\d)$.
The algorithm is presented in Algorithm \ref{alg:erdos}.


\begin{algorithm}
	\algsetup{linenosize=\small}
	\small
	\begin{multicols}{2}
		\begin{algorithmic}[1]
			\REQUIRE Data $\d$; initial guess for $\hat{N}$, denoted
			by $N(0)$; parameters of the beta prior, $\alpha$ and $\beta$
			\ENSURE Estimate for the population size $\hat{N}$
			
			\STATE $\tn(0)\gets N(0)-|W|$
			\STATE $\tau\gets1$  
			
			\REPEAT
			
			\STATE Propose $\tn'(\tau)$ according to a proposal distribution
			$g(\tn(\tau-1)\to\tn'(\tau))$
			
			\STATE $q\gets\min\{1,\frac{\Lambda(\tn'(\tau);\d)g(\tn'(\tau)\to\tn(\tau-1))}{\Lambda(\tn(\tau-1);\d)g(\tn(\tau-1)\to\tn'(\tau))}\}$
			
			

			\STATE $\tn(\tau)\gets\tn'(\tau)$ with probability $q$; otherwise
			$\tn(\tau)\gets\tn(\tau-1)$
			
			\STATE $\tau\gets\tau+1$
			
			\UNTIL{some termination condition is satisfied}
			
			\STATE Look at $\{\tn(\tau):\tau>\tau_{0}\}$ and view it as the sampled
			posterior distribution for $\tn$
			
			\STATE Let $\hat{\tn}$ be the posterior mean with respect to the sampled
			posterior distribution.
			
		\end{algorithmic}
	\end{multicols}
	
	%
	%
	%
	%
	%
	%
	%
	%
	%
	%
	\caption{Population size estimation algorithm $\protect\alg$ (\er
		case)\label{alg:erdos}}
\end{algorithm}

\subsection{General Stochastic Blockmodel Model}

In the \er case, $y(v)=\tilde{d}(v)$.
However, in the general stochastic blockmodel case, in addition to
the unknown variables $\tn_{1},\tn_{2},\ldots,\tn_{K}$ to be estimated,
we do not know $y_{i}(v)$ ($v\in W$, $i=1,2,3,\ldots,K$) either.
The expression $L_{\neg W}(\d;p)$ involves costly summation over all possibilities of integer composition
of $\td(v)$ ($v\in W$).
However, the joint posterior distribution for $\tn$ and $y$, which is
proportional to $\int L(\tn,y;\d,p)\phi(\tn)\psi(p)dp$, does not
involve summing over integer partitions; thus we may sample from the
joint posterior distribution for $\tn$ and $y$, and obtain the marginal
distribution for $\tn$. 
Our proposed algorithm $\alg$ realizes this idea. Let 
$L(\tn,y;\d)=\int L(\tn,y;\d,p)\psi(p)dp$.
We know that the joint posterior distribution for $\tn$ and $y$,
denoted by $\p[\tn,y|\d]$, is proportional to $\Lambda(\tn,y;\d)\triangleq L(\tn,y;\d)\psi(\tn)$.
In addition, the conditional distributions $\p[\tn_{i}|\tn_{\neg i},y]$
and $\p[y(v)|\tn,y(\neg v)]$ are also proportional to $L(\tn,y;\d)\psi(\tn)$,
where $\tn_{\neg i}=(\tn_{j}:1\leq j\leq K,j\neq i)$, $y(v)=(y_{i}(v):1\leq i\leq K)$ and $y(\neg v)=(y(w):w\in W,w\neq v)$.  
The proposed algorithm $\alg$ is a Gibbs sampling process that samples
from the joint posterior distribution (i.e., $\p[\tn,y|\d]$), which is specified
in Algorithm \ref{alg:general}. 

\begin{algorithm}[t!]
	\algsetup{linenosize=\small}
	\small
	\begin{multicols}{2}
		\begin{algorithmic}[1]
			\REQUIRE Data $\d$; initial guess for $\tn$, denoted by $\tn^{(0)}$;
			initial guess for $y$, denoted by $y^{(0)}$; parameters of the beta
			prior, $\alpha_{ij}$ and $\beta_{ij}$, $1\leq i\leq j\leq K$.
			\ENSURE Estimate for the population size $\hat{N}$
			\STATE $\tau\gets 1$
			\REPEAT
			
			\STATE Randomly decide whether to update $\tn$ or $y$
			
			\IF{update $\tn$}
			
			\STATE Randomly selects $i\in[1,K]\cap\mathbb{N}$.
			
			\STATE $\tn^{*}\gets\tn^{(\tau-1)}$
			
			\STATE Propose $\tn_{i}^{*}$ according to the proposal
			distribution $g_{i}(\tn_{i}^{(\tau-1)}\to\tn_{i}^{*})$
			
			
			\STATE $q\gets\min\{1,\frac{\Lambda(\tn^{*},y;\d)g_{i}(\tn_{i}^{*}\to\tn_{i}^{(\tau-1)})}{\Lambda(\tn^{(\tau-1)},y;\d)g_{i}(\tn_{i}^{(\tau-1)}\to\tn_{i}^{*})}\}$
			
			
			\STATE $\tn^{(\tau)}\gets\tn^{*}$ with probability $q$;
			otherwise $\tn^{(\tau)}\gets\tn^{(\tau-1)}$.
			
			\STATE $y^{(\t)}\gets y^{(\t-1)}$
			
			\ELSE
			
			\STATE Randomly selects $v\in W$.
			
			\STATE $y^{*}\gets y^{(\t-1)}$
			
			\STATE Propose $y(v)^{*}$ according to the proposal distribution
			$h_{v}(y(v)^{(\t-1)}\to y(v)^{*})$
			
			\STATE $q\gets\min\{1,\frac{L(\tn,y^{*};\d)h_{v}(y(v)^{*}\to y(v)^{(\t-1)})}{L(\tn,y;\d)h_{v}(y(v)^{(\t-1)}\to y(v)^{*})}\}$
			
			\STATE $y^{(\t)}\gets y^{*}$ with probability $q$; otherwise
			$y^{(\t)}\gets y^{(\t-1)}$.
			
			\STATE $\tn^{(\t)}\gets\tn^{(\t-1)}$
			
			\ENDIF
			
			\STATE $\t\gets\t+1$
			
			\UNTIL{some termination condition is satisfied}
			
			\STATE Look at $\{\tn(\tau):\tau>\tau_{0}\}$ and view it as the sampled
			posterior distribution for $\tn$
			
			\STATE Let $\hat{N}$ be the posterior mean of $\sum_{i=1}^{K}\tn_{i}+|W|$
			with respect to the sampled posterior distribution.%
		\end{algorithmic}
	\end{multicols}

	\caption{Population size estimation algorithm $\protect\alg$ (general stochastic
		blockmodel case)\label{alg:general}}
\end{algorithm}

For every $v\in W$ and $i=1,2,3,\ldots,K$,
$0\leq y_{i}(v)\leq\tn_{i}$ because the number of type-$(i,t(v))$
pendant edges of vertex $v$ must not exceed the total number of type-$i$
vertices outside the sample. Therefore, we have $\tn_{i}\geq\max_{v\in W}y_{i}(v)$
must hold for every $i=1,2,3,\ldots,K$. These observations put constraints
on the choice of proposal distributions $g_{i}$ and $h_{v}$, $i=1,2,3,\ldots,K$
and $v\in W$; i.e., the support of $g_{i}$ must be contained in
$[\max_{v\in W}y_{i}(v),\infty)\cap\mathbb{N}$ and the support of
$h_{v}$ must be contained in 
$
\{y(v):\forall1\leq i\leq K,0\leq y_{i}(v)\leq\tn_{i},\sum_{j=1}^{K}y_{i}(v)=\td(v)\}.
$

Let $\omega_{i}$ be the window size for $\tn_{i}$, taking values in $\mathbb{N}$. Let 
$
l=\max\{\max_{v\in W}y_{i}(v),\tn_{i}^{(\t-1)}-\omega_{i}\}.
$
Let the proposal distribution $g_{i}$ be defined as below:
\[
g_{i}(\tn_{i}^{(\tau-1)}\to\tn_{i}^{*})=\begin{cases}
\frac{1}{2\omega_{i}+1} & \mbox{if }l\leq\tn_{i}^{*}\leq l+2\omega_{i}\\
0 & \mbox{otherwise}.
\end{cases}
\]
The proposed value $\tn_{i}^{*}$ is always
greater than or equal to $\max_{v\in W}y_{i}(v)$. This proposal
distribution uniform within the window $[l,l+2\omega_{i}]$, and
thus the proposal ratio is 
$ g_{i}(\tn_{i}^{*}\to\tn_{i}^{(\tau-1)})/g_{i}(\tn_{i}^{(\tau-1)}\to\tn_{i}^{*})=1$.
The proposal for $y(v)$ and its proposal ratio are presented in the \textbf{appendix}.


\section{Experiment}

\label{sec:Experiment}

\subsection{\er}
%
%

\vspace{-2mm}
\textbf{Effect of Parameter $p$.} We first evaluate the performance of \alg in the \er case.  
We fix the  size of the network at
$N=1000$ and the sample size $|W|=280$ and vary the parameter $p$.
For each $p\in[0.1,0.9]$, we sample
$100$ graphs from  $G(N,p)$. For each selected graph, we compute NSUM and run \alg $50$ times (as it is a randomized algorithm) to compute its performance. We record the relative errors by the Tukey boxplots shown in Fig. \ref{fig:Estimation-error}.
The posterior mean proposed
by $\alg$ is an accurate estimate of the size.
For the parameter $p$ varying from $0.1$ to $0.9$, most of the
relative errors are bounded between $-1\%$ and $1\%$. We also observe that the NSUM tends to overestimate the size as it shows a positive bias. This 
confirms experimentally the result of Theorem \ref{thm:nsu}. For both methods, the interquartile ranges (IQRs, hereinafter) correlate
negatively with $p$. This shows that the variance
of both estimators shrinks when the graph becomes  denser. The
relative errors of  $\alg$ tend to concentrate around
$0$ with larger $p$ which means that the performance of $\alg$ improves with
larger $p$. In contrast, a larger $p$ does not improve
the bias of the NSUM.

\begin{figure}[t!]
	\begin{subfigure}[t!]{.33\textwidth}
		\begin{center}
			\includegraphics[width=\columnwidth]{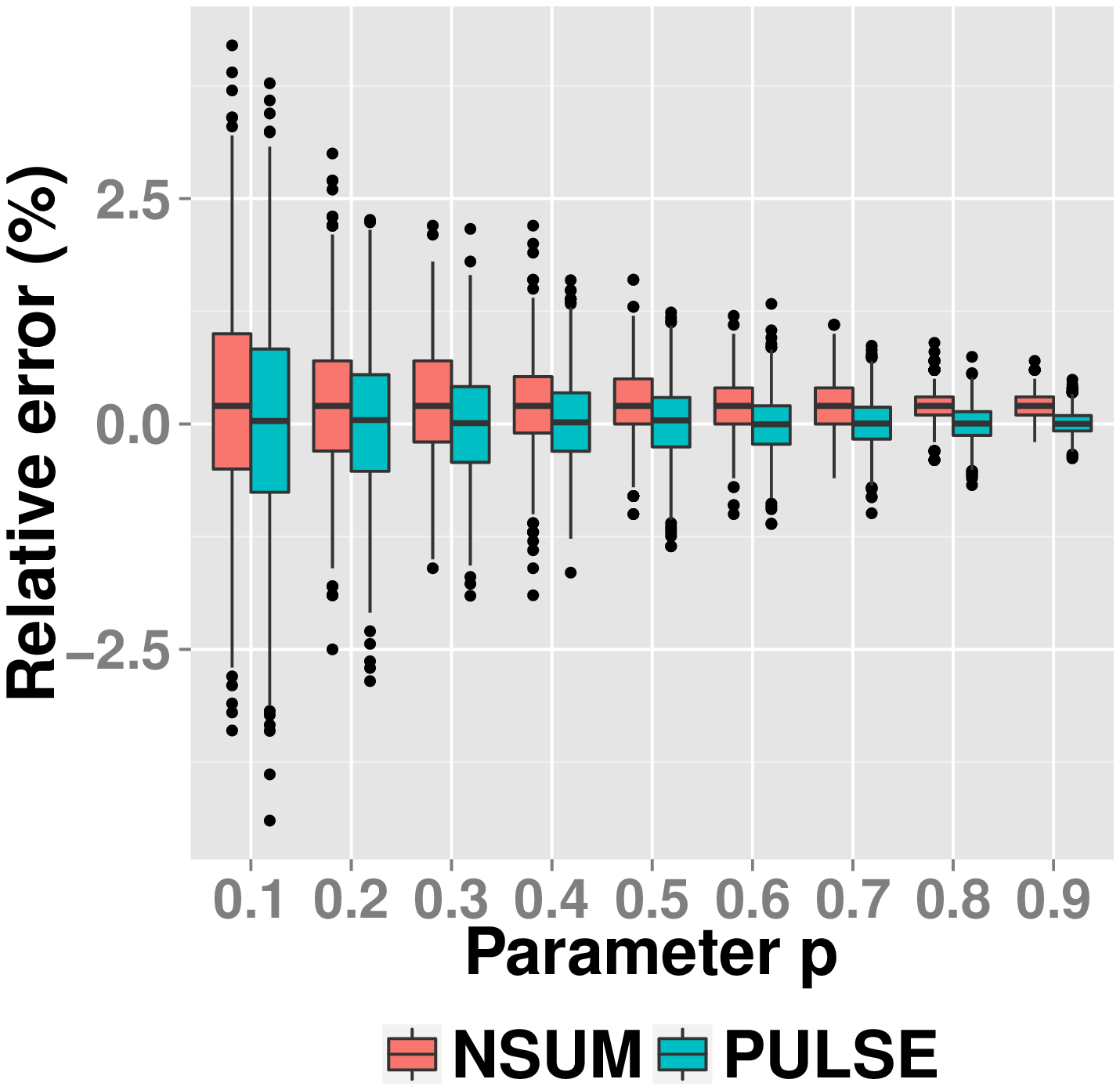}
		\end{center}
		\caption{\label{fig:Estimation-error}}
	\end{subfigure}\hfill
	\begin{subfigure}[t!]{.33\textwidth}
		\begin{center}
			\includegraphics[width=\columnwidth]{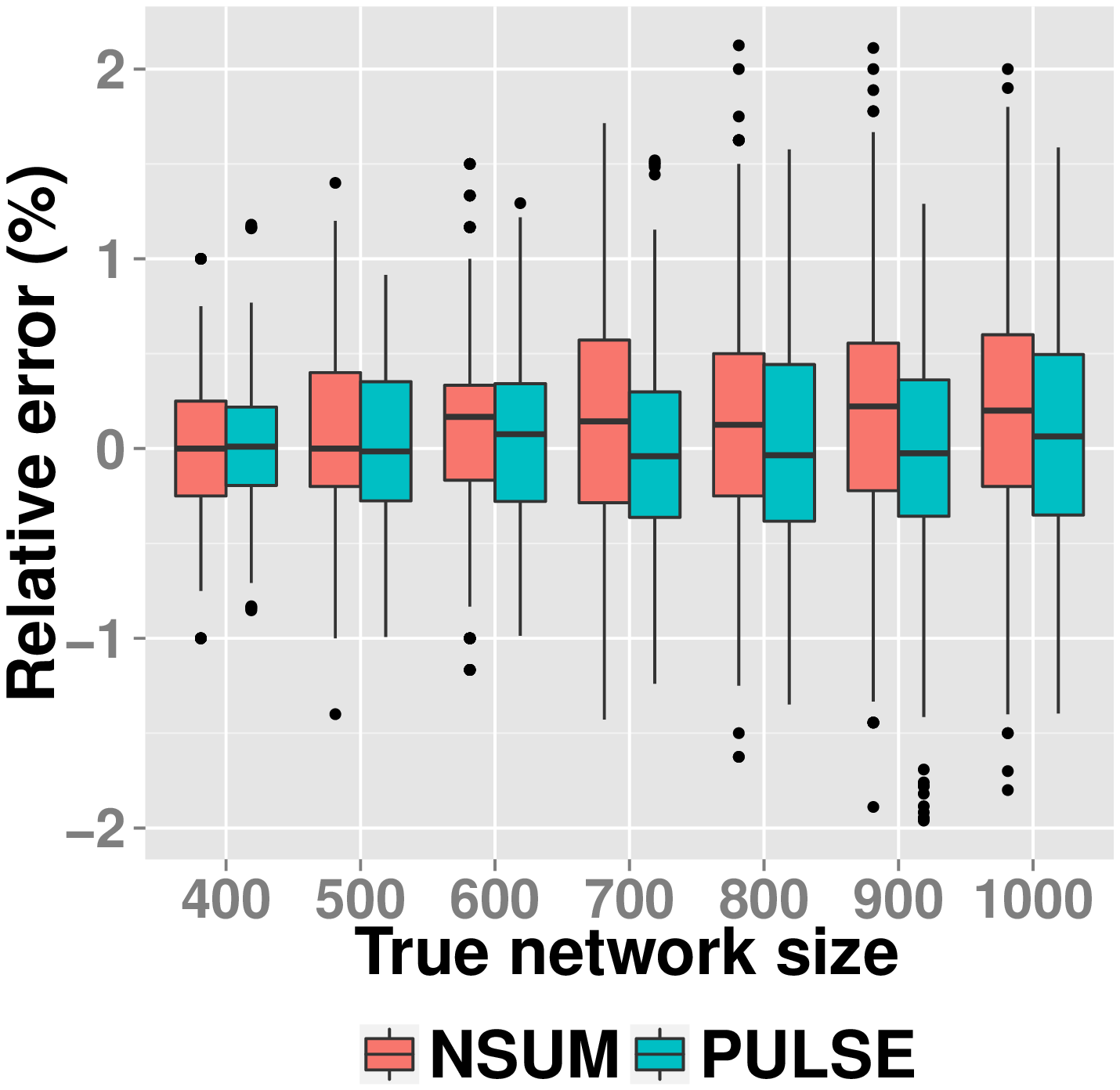}
		\end{center}
		\caption{\label{fig:varyN}}
	\end{subfigure}\hfill
	\begin{subfigure}[t!]{.33\textwidth}
		\begin{center}
			\includegraphics[width=\columnwidth]{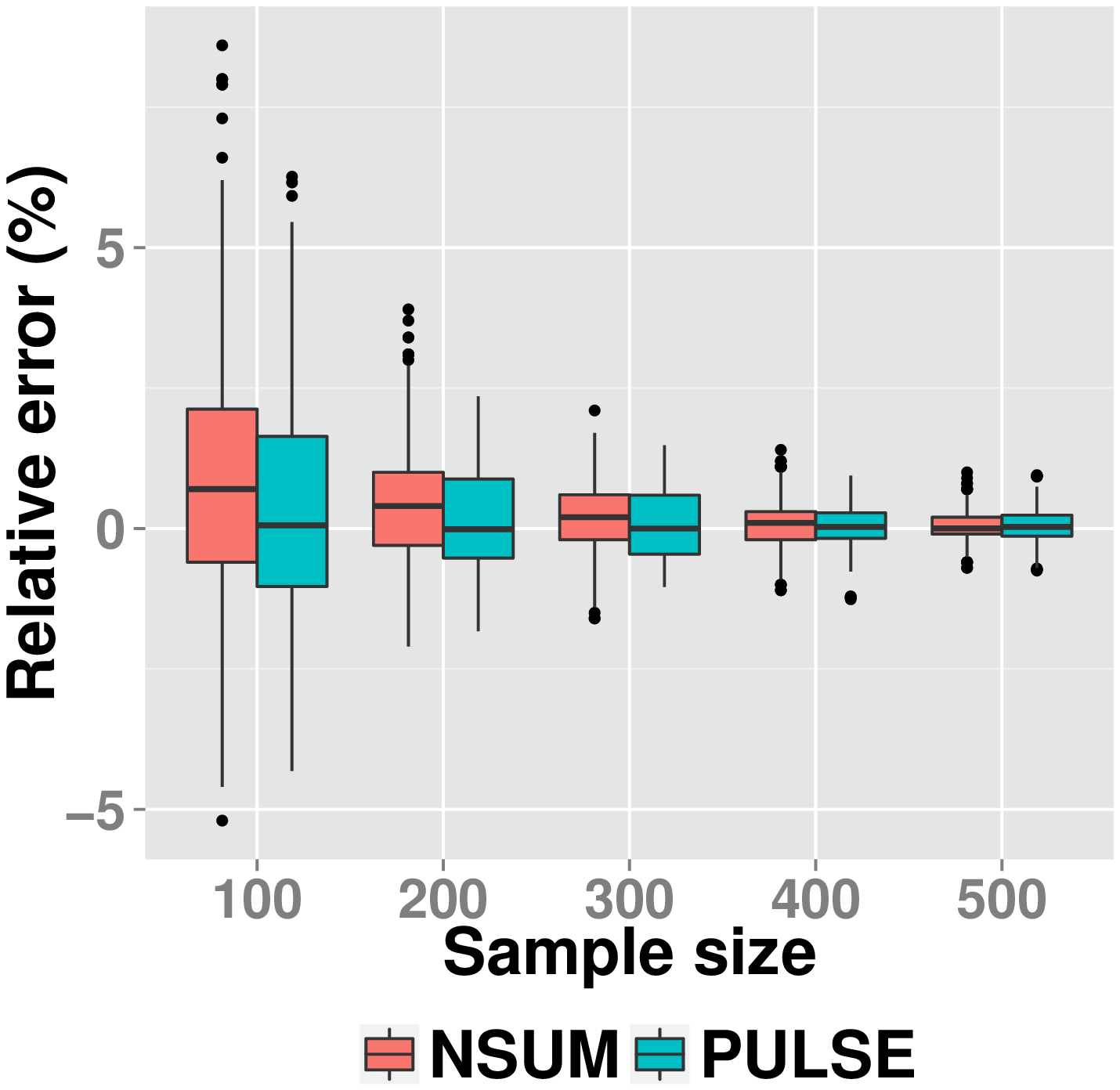}
		\end{center}
		\caption{\label{fig:varyS}}
	\end{subfigure}
	\begin{subfigure}[t!]{.67\textwidth}
		\begin{center}
			\includegraphics[width=\columnwidth]{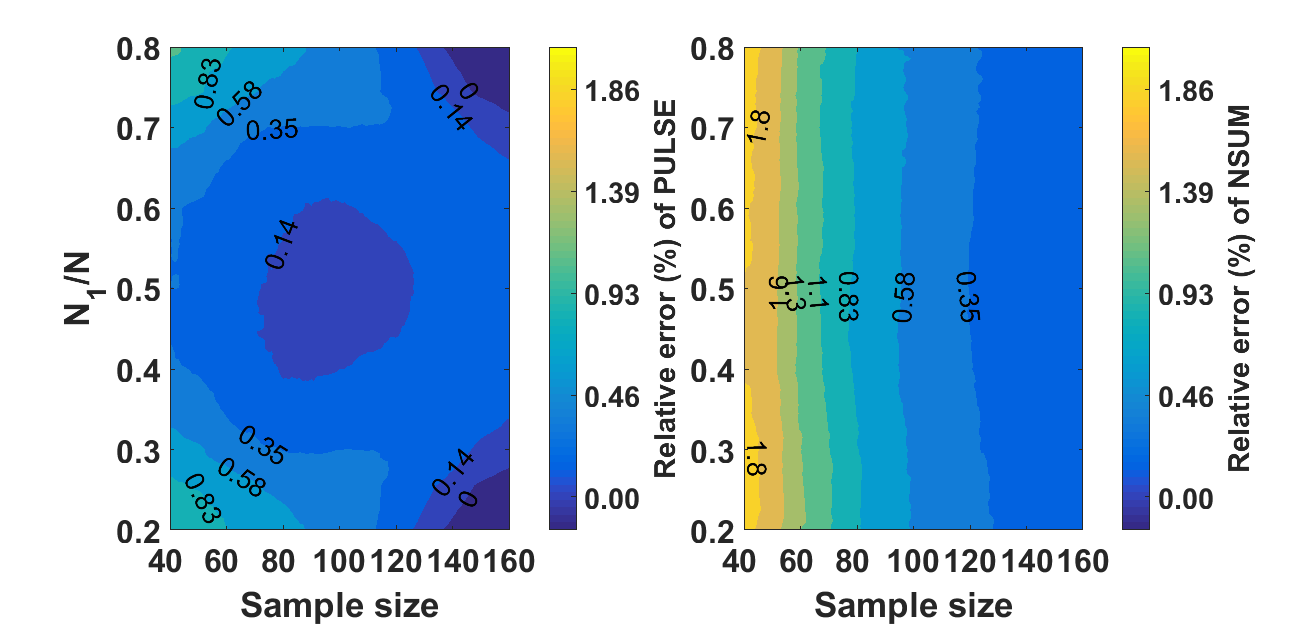}
		\end{center}
		\caption{\label{fig:heatmap}}
	\end{subfigure}
	\begin{subfigure}[t!]{.33\textwidth}
		\begin{center}
			\includegraphics[width=\columnwidth]{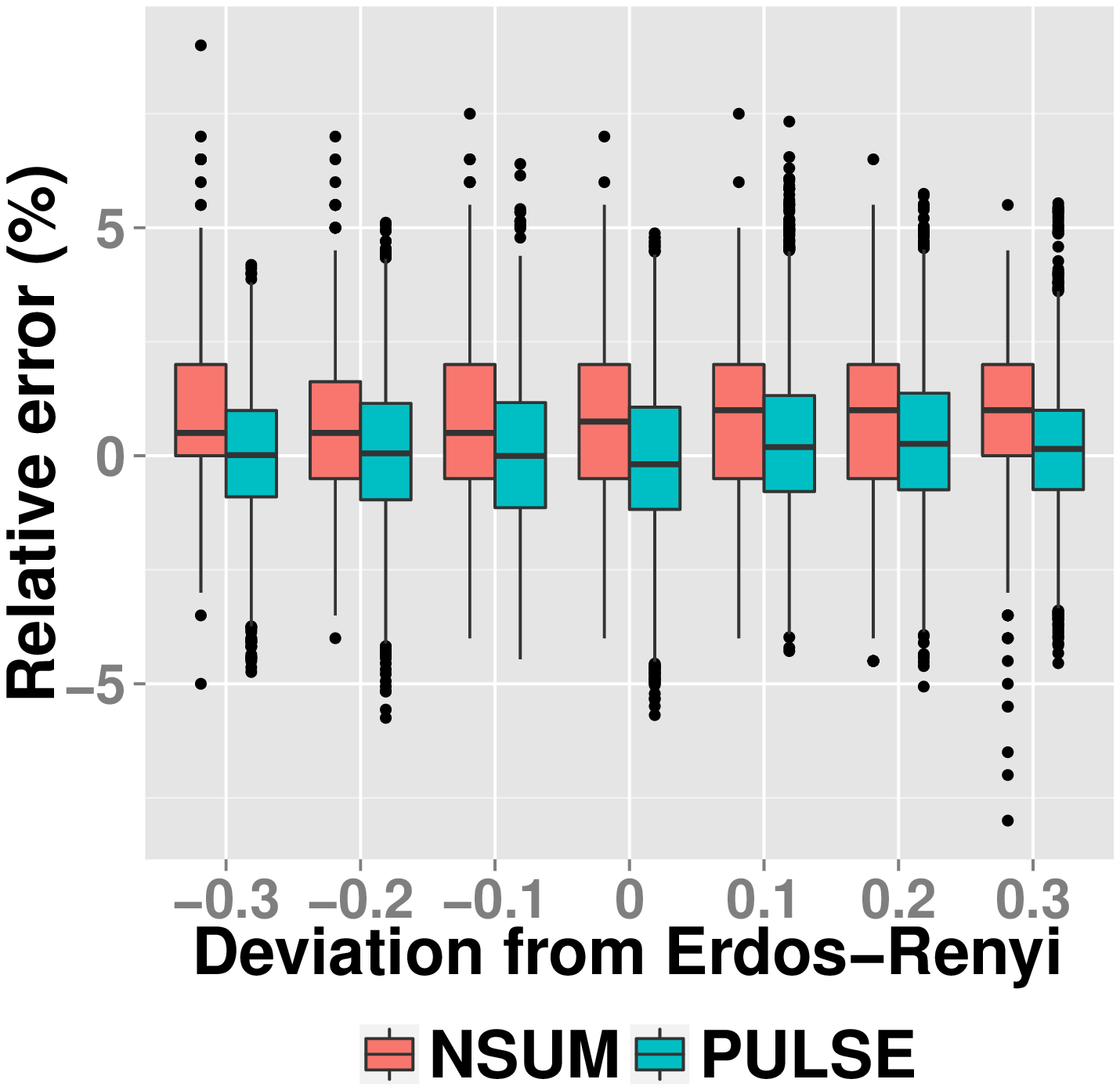}
		\end{center}
		\caption{\label{fig:eps_totalN}}
	\end{subfigure}
	\begin{subfigure}[t]{.33\textwidth}
		\begin{center}
			\includegraphics[width=\columnwidth]{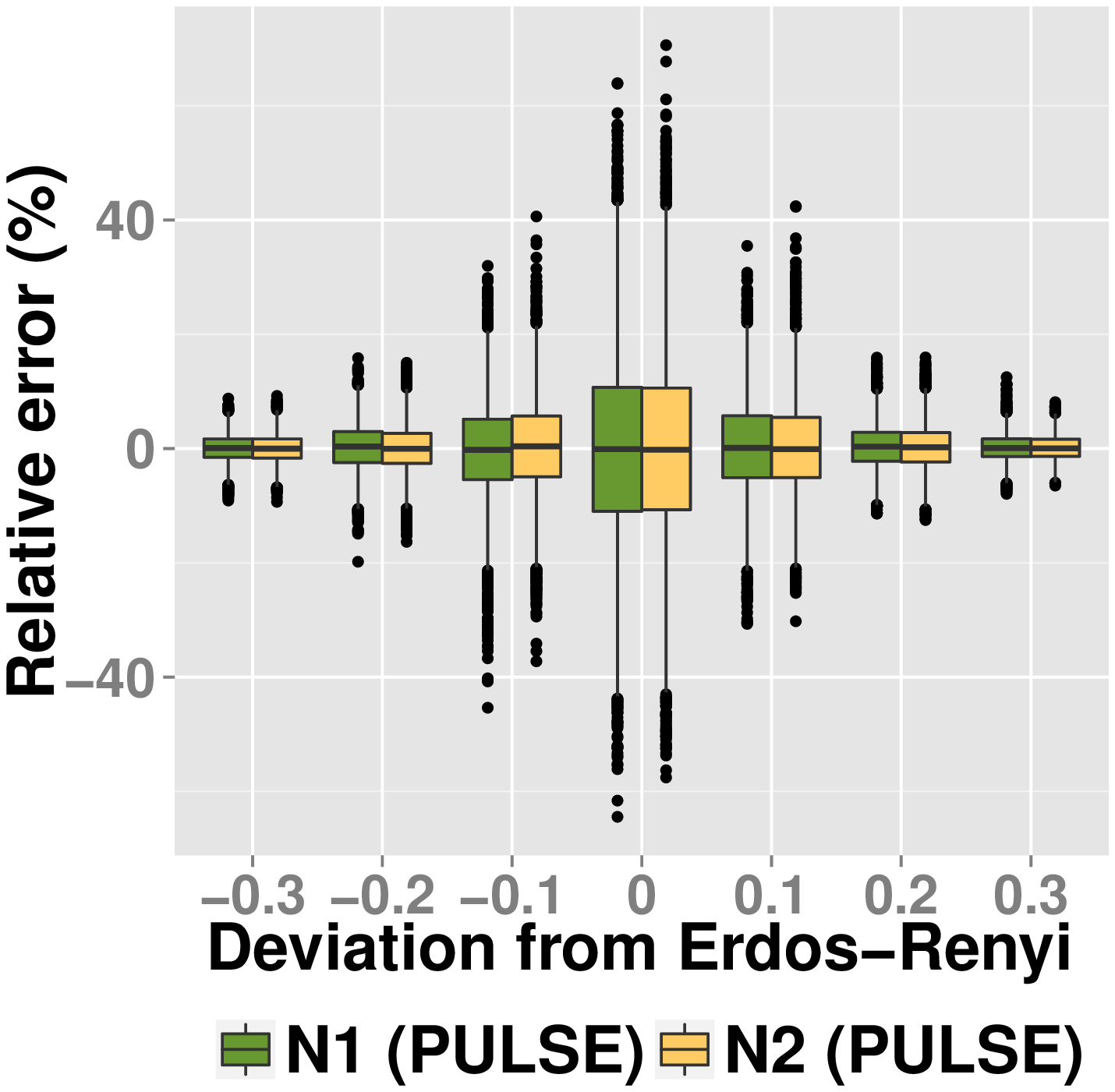}
		\end{center}
		\vspace{-0.4cm}
		\caption{\label{fig:eps_N1}}
	\end{subfigure}\hfill
	\begin{subfigure}[t]{.66\textwidth}
		\centering
		\includegraphics[width=\columnwidth]{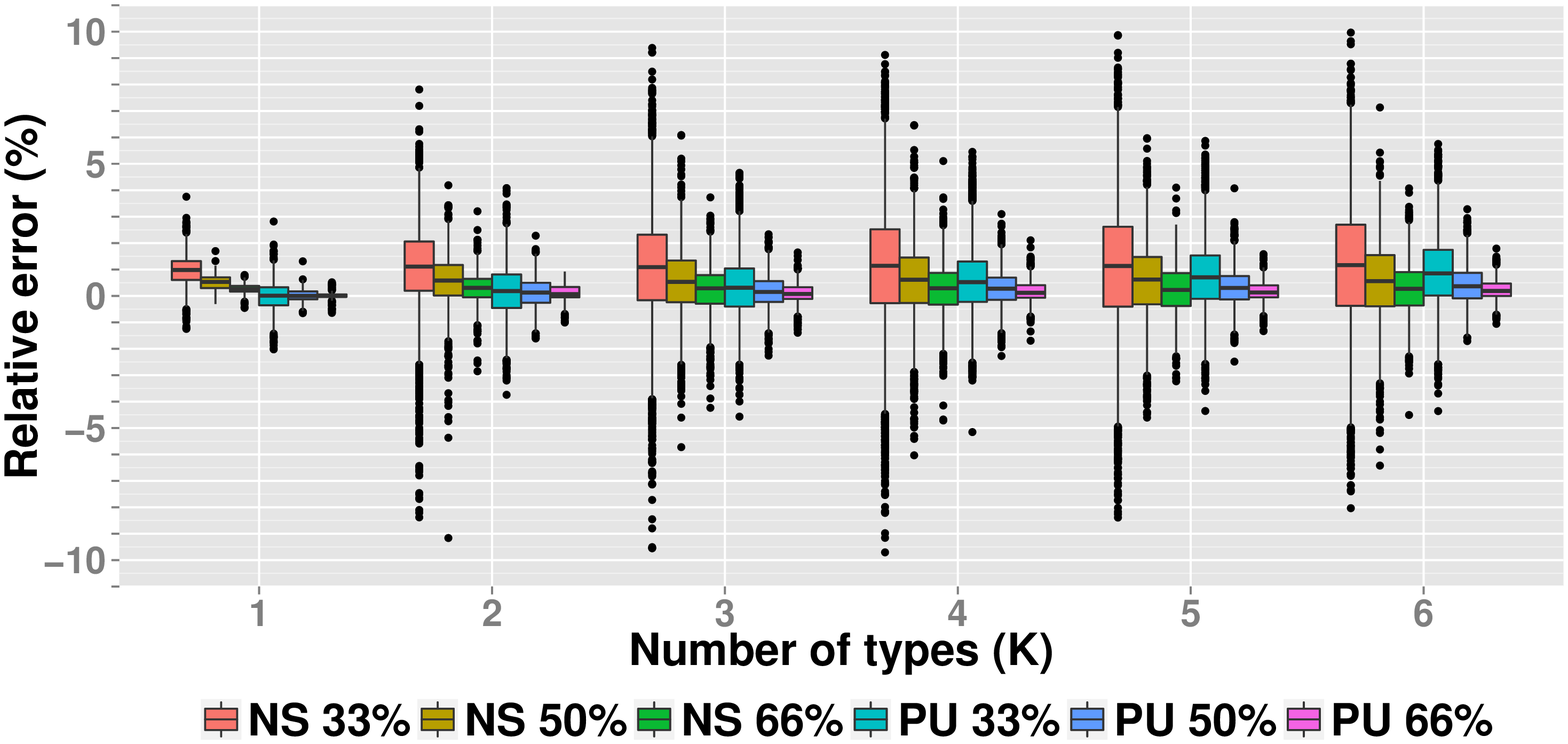}
		\caption{\label{fig:varyK}}
	\end{subfigure}
	\caption{\small Fig.~\ref{fig:Estimation-error}, \ref{fig:varyN} and \ref{fig:varyS} are the results of the \er case: (a) Effect of parameter $p$ on the estimation error. (b) Effect of the  network size on the estimation error. (c) Effect of the sample size on the estimation error. Fig.~\ref{fig:heatmap}, \ref{fig:eps_totalN}, \ref{fig:eps_N1} and \ref{fig:varyK} are the results of the general SBM case: (d) Effect of sample size and type partition on the relative error.  Note that the color bar on the right is on logarithmic scale. (e) Effect of deviation from the \er model (controlled by
		$\epsilon$) on the relative error of NSUM and \alg in the SBM with $K=2$. (f) Effect of deviation from the \er model (controlled by
		$\epsilon$) on the relative  error of   \alg in estimating 
		the number of type-$1$ and type-$2$ nodes in the SBM with $K=2$. (g) Effect of the number of types $K$ and the sample size on the population  estimation. 
		%
		%
		The percentages 
		are the sampling fractions $n/N$. 
		The horizontal axis represents the number of types $K$ that varies
		from $1$ to $6$. The vertical axis is the relative error in percentage.}
\end{figure}

\textbf{Effect of  Network Size $N$.} We fix the parameter $p=0.3$ and the
sample size $|W|=280$ and vary the  network size $N$ from $400$
to $1000$. For each $N\in[400,1000]$, we randomly pick $100$ graphs
from  $G(N,p)$.  For each selected graph, we compute NSUM and run \alg $50$ times. We illustrate the results via Tukey boxplots in Fig.
\ref{fig:varyN}. 
Again, the estimates given by $\alg$ are very accurate. Most
of the relative errors reside in $[-0.5\%,0.5\%]$ and almost
all  reside in $[-1\%,1\%]$. We also observe that  smaller
network sizes can be estimated more accurately as $\alg$ will have  a
smaller variance. For example, when the  network size is $N=400$,
almost all of the relative errors are bounded in the range $[-0.7\%,0.7\%]$
while for  $N=1000$, the relative errors are in  $[-1.5\%,1.5\%]$. This agrees with our intuition that the
performance of estimation improves with a larger sampling fraction. In contrast, NSUM heavily overestimates  the  network size as the size increases. In
addition, its variance also correlates positively with network
size.

\textbf{Effect of Sample Size $|W|$.} We study the effect of the sample size
$|W|$ on the estimation error. Thus, we fix the  size $N=1000$ and the parameter $p=0.3$, and we vary the sample
size $|W|$ from $100$ to $500$. For each $|W|\in[100,500]$, we randomly select $100$ graphs from $G(N,p)$. For every selected graph, we compute the NSUM estimate, run $\alg$ $50$ times, and record the relative
errors. The results are presented in Fig. \ref{fig:varyS}. 
We observe that for both methods that the
IQR shrinks as the sample size increases; thus a larger
sample size reduces the variance of both estimators. 
$\alg$ does not exhibit appreciable bias when the sample size varies from $100$
to $500$. Again, NSUM overestimates the size; however, its bias reduces when the sample
size becomes large. This reconfirms Theorem \ref{thm:nsu}.

\vspace{-2mm}
\subsection{General Stochastic Blockmodel}
\vspace{-1mm}

\textbf{Effect of Sample Size and Type Partition.} Here, we study the effect of the sample size
and the type partition. We set the network size $N$ to $200$ and we assume that there
are two types of vertices in this network: type $1$ and type $2$
with $N_{1}$ and $N_{2}$ nodes, respectively. The ratio $N_{1}/N$
quantifies the type partition. We vary $N_{1}/N$ from $0.2$ to $0.8$
and the sample size $|W|$  from $40$ to $160$. For each combination
of $N_{1}/N$ and the sample size $|W|$, we generate $50$ graphs with $p_{11},p_{22}\sim\mathrm{Uniform}[0.5,1]$
and $p_{12}=p_{21}\sim\mathrm{Uniform}[0,\min\{p_{11},p_{22}\}]$. 
For each graph, we compute the NSUM and obtain the average relative error. Similarly, for each graph, we run $\alg$ $10$ times in order to  compute the average
relative error for the $50$ graphs and $10$ estimates for each graph.
%
The results are shown as  heat maps in Fig.
\ref{fig:heatmap}. Note that the color bar on the right side of Fig. \ref{fig:heatmap}
is on logarithmic scale.
In general, the estimates given by \alg are very accurate and exhibit significant superiority over the NSUM estimates. The largest relative errors of \alg in absolute value, which are approximately $1\%$, appear in the upper-left and lower-left corner on the heat map.  
The performance of the NSUM (see the lower subfigure in Fig.~\ref{fig:heatmap}) is robust to the type partition and equivalently the ratio $N_1/N$. As we enlarge the sample size, its relative error decreases.

The left subfigure in Fig.~\ref{fig:heatmap} shows the performance of \alg. When the sample size is small, the relative error decreases as 
$N_{1}/N$ increases from $0.2$ to $0.5$; when 
$N_{1}/N$
rises from $0.5$ to $0.8$, the relative error becomes large. 
Given the fixed ratio $N_{1}/N$, as expected, the relative error
declines when we have a larger sample. This agrees with our observation
in the \er case. 
However, when the sample size is large, \alg exhibits better performance 
when the type partition is more homogeneous. 
There is a local minimum relative error in absolute value shown at the center of the subfigure. \alg performs best when there is a balance between the number of edges in the sampled induced subgraph and the number of pendant edges emanating outward.  Larger sampled subgraphs allow more precision in knowledge about $p_{ij}$, but more pendant edges allow for better estimation of $y$, and hence each $N_i$. Thus when the sample is approximately half of the total vertex set size, the balanced combination of the number of edges within the sample and those emanating outward leads to better performance. 

\textbf{Effect of Intra- and Inter-Community Edge Probability.} 
Suppose that there are two types of nodes in the network.
The mean degree is given by
\[
d_{\text{mean}}=\frac{2}{N}\left[\binom{N_{1}}{2}p_{11}+\binom{N_{1}}{2}p_{22}+N_{1}N_{2}p_{12}\right].
\]
We want to keep the mean degree constant and vary the random graph
gradually so that we observe 3 phases: high intra-community and low inter-community
edge probability (more cohesive), \er, and low intra-community and high inter-community edge probability
(more incohesive). 
%
%
%
We introduce a cohesion parameter $\epsilon$.
In the two-block model, we have $p_{11}=p_{22}=p_{01}=\tilde{p}$, where $\tilde{p}$ is a constant.
Let's call $\epsilon$ the deviation from this situation and let
$$p_{11}=\tilde{p}+\frac{N_{1}N_{2}\epsilon}{2\binom{N_{1}}{2}}, p_{22}=\tilde{p}+\frac{N_{1}N_{2}\epsilon}{2\binom{N_{2}}{2}},p_{12}=\tilde{p}-\epsilon.$$
%
The mean degree stays constant for different
$\epsilon$. In addition, $p_{11}$, $p_{12}$
and $p_{22}$ must reside in $[0,1]$. This requirement can
be met if we set the absolute value of $\epsilon$ small enough. By changing $\epsilon$ from positive to negative we go from cohesive behavior to incohesive behavior.  Clearly, for $\epsilon=0$, the graph becomes an \er
graph with $p_{11}=p_{22}=p_{01}=\tilde{p}$.

%

We set the network size
$N$ to $850$, $N_1$ to $350$, and $N_2$ to $500$.
We fix $\tilde{p}=0.5$
and let  $\epsilon$ vary from $-0.3$ to $0.3$. When
$\epsilon=0.3$, the intra-community edge probabilities are $p_{11}=0.9298$
and $p_{22}=0.7104$ and the inter-community edge probability is $p_{12}=0.2$.
When $\epsilon=-0.3$, the intra-community edge probabilities are
$p_{11}=0.0702$ and $p_{22}=0.2896$ and the inter-community edge
probability is $p_{12}=0.8$. For each $\epsilon$, we generate $500$
graphs and for each graph, we run $\alg$ $50$ times. Given each
value of $\epsilon$, relative errors are shown in box plots.
%
We present the results  in Fig. \ref{fig:eps_totalN} as we vary $\epsilon$. From Fig. \ref{fig:eps_totalN}, we  observe that despite deviation from the \er graph, both methods are robust. However, the figure indicates that \alg is unbiased (as median is around zero) while NSUM overestimates the size on average. This again confirms Theorem \ref{thm:nsu}.

%

%
An important feature of  \alg is that it can also estimate the number of nodes of each type while NSUM cannot.  The results for  type-$1$
and type-$2$  with different 
$\epsilon$ are shown in Fig.
\ref{fig:eps_N1}. 
We  observe that the median  of
all boxes agree with the $0\%$ line; thus the separate estimates
for $N_{1}$ or $N_{2}$ are unbiased. Note that  when
the edge probabilities are more homogeneous (i.e., when the graph
becomes more similar to the \er model) the IQRs,
as well as the interval between the two ends of the whiskers, become
larger. This shows that when we try to fit an \er model (a
single-type stochastic blockmodel) into a two-type model, the variance
becomes larger.

\vspace{-2mm}
\subsubsection{Effect of Number of Types and Sample Size}

Finally, we study the impact of the number of types $K$
and the sample size
$|W|=n$ on the relative  error. To generate graphs 
with different number of types, we use a Chinese
restaurant process (CRP)~\cite{aldous1985exchangeability}. 
We set the total number of vertices to $200$, first  pick 100 vertices and use the Chinese restaurant process to assign them to different types. Suppose that 
CRP gives
$K$ types; We then distribute the remaining 100 vertices evenly among the $K$ types. 
%
The
edge probability $p_{ii}$ ($1\leq i\leq K$) is sampled from $\mathrm{Uniform}[0.7,1]$
and $p_{ij}$ ($1\leq i<j\leq K$) is sampled from $\mathrm{Uniform}[0,\min\{p_{ii},p_{jj}\}]$,
all independently. We set the sampling fraction $n/N$ to $33\%$, $50\%$ and $66\%$, and use NSUM and \alg to estimate the network size. Relative estimation errors are
illustrated  in Fig. \ref{fig:varyK}. 
We observe that with the same
sampling fraction $n/N$ and same the number of types
$K$,  $\alg$ has a smaller relative error than that of the NSUM. Similarly,  the interquartile
range of \alg is also smaller than that of the NSUM. Hence, \alg provides a higher accuracy with a smaller variance. 
%
%
%
For both methods the relative error decreases (in absolute value) as the sampling fraction
increases. Accordingly, the IQRs also shrink
for larger sampling fraction. 
With the sampling fraction fixed, the IQRs  become larger when we increase the number of
types in the graph.  The variance of both methods 
increases for increasing values of $K$. The median of NSUM  is always above $0$ on average which indicates that it overestimates the network size. 
%


\section{Conclusion}
In this paper, we have developed a method for using a random sub-sample to estimate the size of a graph whose distribution is given by a SBM. We analyzed the bias of the widely-used network
scale-up estimator theoretically and showed that for sufficiently large
sample sizes, it overestimates the vertex
set size in expectation (but asymptotically unbiased).
Regularity results establish the conditions under which the posterior distribution of the population size is well-defined.
Extensive experimental results show that $\alg$
outperforms the network scale-up estimator in terms of the relative
error and estimation variance.

\label{sec:Conclusion}

\newpage
\bibliographystyle{abbrv}
{\small
\bibliography{reference-list}}


\newpage

\section*{Appendix}
\subsection*{Table of Notation}

Notation in this paper is summarized in Table~\ref{tab:Notation}.

\begin{table}[htb]
	\small\centering
	\begin{tabular}{|c|l|c|l|}
		\hline 
		$G=(V,E)$& Underlying graph structure & $E_{ij}$ & \# of type-$(i,j)$ edges in the sample  \tabularnewline
		\hline 
		$N=|V|$ & True population size & $E_{S}$ & \# of edges in the sample\tabularnewline
		\hline 
		$K$  & \# of types & $y_{i}(v)$ & \# of type-$(t(v),i)$ pendant edges of $v$\tabularnewline
		\hline 
		$t(v)$ & Type of vertex $v\in V$ & $\td(v)$ & \# of pendant edges of vertex $v\in W$\tabularnewline
		\hline 
		$d(v)$ & Degree of vertex $v\in V$ in $G$ & $\tn_{i}$ & \# of type-$i$ vertices outside sample\tabularnewline
		\hline 
		$N_{i}$ & Total \# of type-$i$ vertices & $\tn$ & $(\tn_{i}:1\leq i\leq K)$\tabularnewline
		\hline 
		$W$ & Vertex set of the sample
		& $p$ & $(p_{ij}:1\leq i<j\leq K)$\tabularnewline
		\hline 
		$n=|W|$ & Sample size & $y(v)$ & $(y_{i}(v):1\leq i\leq K)$\tabularnewline
		\hline 
		$G(W)$ & Subgraph of $G$ induced by $W$ & $y$ & $(y_{i}(v):v\in W,1\leq i\leq K)$\tabularnewline
		\hline 
		$V_{i}$ & \# of type-$i$ vertices in the sample &&\tabularnewline
		\hline 
	\end{tabular}
	\caption{Notation\label{tab:Notation}}
\end{table}

\subsection*{Proof of Theorem \ref{thm:nsu}}

Define $p$ is the probability that two different nodes $u$ and $v$
have an edge between them when they are sampled uniformly at random
from the vertex set. Thus the probability $p$ is given by
\[
p=\sum_{i=1}^{K}\frac{\binom{N_{i}}{2}}{\binom{N}{2}}p_{ii}+\sum_{1\leq i<j\leq K}\frac{N_{i}N_{j}}{\binom{N}{2}}p_{ij}.
\]
Let's compute the expectation of the network scale-up estimator. We
have
\begin{eqnarray*}
	& &  \e\left[\hat{N}_{NS}|E_{S}>0\right]= \e\left[n\cdot\frac{\sum_{v\in W}d(v)}{2E_{S}}\middle\vert E_{S}>0\right]\\
	& = & n\cdot\e\left[\frac{\sum_{v\in W}\td(v)+2E_{S}}{2E_{S}}\middle\vert E_{S}>0\right]\\
	& = & n\cdot\e\left[\frac{\sum_{v\in W}\td(v)}{2E_{S}}+1\middle\vert E_{S}>0\right]\\
	& = & n\cdot\left(\frac{\e\left[\sum_{v\in W}\td(v)\right]\e\left[\frac{1}{E_{S}}\middle\vert E_{S}>0\right]}{2}+1\right)\\
	& = & n\cdot\left(\frac{n(N-n)p}{2}\e\left[\frac{1}{E_{S}}\middle\vert E_{S}>0\right]+1\right).
\end{eqnarray*}
By Jensen's inequality, we know that $$\e\left[\frac{1}{E_{S}}\given E_{S}>0\right]\geq\frac{1}{\e\left[E_{S}\given E_{S}>0\right]}.$$
Plugging this in, we have
\begin{eqnarray*}
	&  & \e\left[\hat{N}_{NS}|E_{S}>0\right]\geq n\cdot\left(\frac{n(N-n)p}{2\e\left[E_{S}|E_{S}>0\right]}+1\right)\\
	& = & n+(N-n)\cdot\left(1+\frac{1}{n-1}\right)\left(1-\p\left[E_{S}=0\right]\right)
\end{eqnarray*}
where 
\[
\p\left[E_{S}=0\right]=\prod_{1\leq i<j\leq K}(1-p_{ij})^{V_{i}V_{j}}\prod_{i=1}^{K}(1-p_{ii})^{\binom{V_{i}}{2}}.
\]
Note that when 
$
\frac{1}{n}>\p\left[E_{S}=0\right],
$
we have 
$$
\left(1+\frac{1}{n-1}\right)\left(1-\p\left[E_{S}=0\right]\right)>1.
$$
When the sample size $n$ is sufficiently large, the inequality 
$\frac{1}{n}>\p\left[E_{S}=0\right]$
will hold because the terms $(1-p_{ij})^{V_{i}V_{j}}$ and $(1-p_{ii})^{\binom{V_{i}}{2}}$
decrease exponentially. In this case, 
$$
\e\left[\hat{N}_{NS}|E_{S}>0\right]>n+(N-n)=N.
$$
Therefore for a sufficiently large sample size, the network scale-up
estimator is biased and always overestimates the vertex set size. Furthermore, in addition to showing that it always overestimates the vertex set, we can derive an asymptotic lower bound for the bias via a more careful analysis.

Let us recall the definitions of asymptotic equality and inequality for completeness.

\begin{defn}
	Let $\{a_n\}$ and $\{b_n\}$ be two sequences of real numbers. We say that $\{a_n\}$ and $\{b_n\}$ are asymptotically equal if $\lim_{n\to \infty} a_n/b_n=1$; in this case, we denote it by\[a_n\sim b_n,n\to \infty\]
\end{defn}
\begin{defn}
	Let $\{a_n\}$ and $\{b_n\}$ be two sequences of real numbers. We say that $\{a_n\}$ is asymptotically greater than or equal to $\{b_n\}$ if there exists a sequence $\{c_n\}$ such that $a_n\geq c_n$ for all $n$ and $c_n\sim b_n$; in this case, we denote it by\[a_n\gtrsim b_n,n\to \infty.\]
\end{defn} 

Recall that we just showed that\[
\e\left[\hat{N}_{NS}|E_{S}>0\right]\geq   n+(N-n)\cdot\left(1+\frac{1}{n-1}\right)\left(1-\p\left[E_{S}=0\right]\right).
\]
Then we have\[
\e\left[\hat{N}_{NS}|E_{S}>0\right]-N \geq \frac{N-n}{n-1}\left(1-n \cdot\p\left[E_{S}=0\right]\right).
\]
Since $\p\left[E_{S}=0\right]$ decreases to $0$ exponentially in $n$, we have $\lim_{n\to \infty} n\cdot \p\left[E_{S}=0\right] =0$. Thus we know that\[
\frac{N-n}{n-1}\left(1-n \cdot\p\left[E_{S}=0\right]\right)\sim \frac{N-n}{n-1} \sim N/n-1. 
\]
Therefore, we deduce that\[\e\left[\hat{N}_{NS}|E_{S}>0\right]-N\gtrsim N/n-1.\]

Now we would like to show its asymptotic unbiasedness. We have 
\[
\e\left[\hat{N}_{NS}|E_{S}>0\right]=n\cdot\left(\frac{n(N-n)p}{2}\e\left[\frac{1}{E_{S}}\middle\vert E_{S}>0\right]+1\right).
\]
To show the asymptotic unbiasedness, we have to derive an upper bound
for the conditional expectation $\e\left[\frac{1}{E_{S}}\given E_{S}>0\right]$.
Let $\delta>0$ be a constant to be determined later. We divide it
into two cases where $E_{S}$ is concentrated around its mean and
the anti-concentration happens:
\begin{eqnarray}
	\e\left[\frac{1}{E_{S}}\middle\vert E_{S}>0\right]\nonumber & = & \e\left[\frac{1}{E_{S}}\middle\vert E_{S}>0,\left|E_{S}-\binom{n}{2}p\right|\leq\delta\right]\nonumber \\
	& \times & \p\left[\left|E_{S}-\binom{n}{2}p\right|\leq\delta\given E_{S}>0\right]\nonumber \\
	& + & \e\left[\frac{1}{E_{S}}\middle\vert E_{S}>0,\left|E_{S}-\binom{n}{2}p\right|>\delta\right]\nonumber \\
	& \times & \p\left[\left|E_{S}-\binom{n}{2}p\right|>\delta\given E_{S}>0\right].\label{eq:two-terms}
\end{eqnarray}
Given $E_{S}>0$, $1/E_{S}$ is always less than or equal to $1$.
Thus the second term in (\ref{eq:two-terms}) can be bounded as below:
\begin{eqnarray*}
	&  & \e\left[\frac{1}{E_{S}}\middle\vert E_{S}>0,\left|E_{S}-\binom{n}{2}p\right|>\delta\right]\times\\
	&  & \p\left[\left|E_{S}-\binom{n}{2}p\right|>\delta\given E_{S}>0\right]\\
	& \leq & \p\left[\left|E_{S}-\binom{n}{2}p\right|>\delta\given E_{S}>0\right]\\
	& \leq & \frac{\p\left[\left|E_{S}-\binom{n}{2}p\right|>\delta\right]}{\p\left[E_{S}>0\right]}\\
	& \leq & \frac{2e^{-2\delta^{2}/\binom{n}{2}}}{1-\p\left[E_{S}=0\right]},
\end{eqnarray*}
where the last inequality is a result of Hoeffding's inequality. The
first term in (\ref{eq:two-terms}) can be bounded as below:
\begin{eqnarray*}
	&  & \e\left[\frac{1}{E_{S}}\middle\vert E_{S}>0,\left|E_{S}-\binom{n}{2}p\right|\leq\delta\right]\times\\
	&  & \p\left[\left|E_{S}-\binom{n}{2}p\right|\leq\delta\given E_{S}>0\right]\\
	& \leq & \e\left[\frac{1}{E_{S}}\middle\vert E_{S}>0,\left|E_{S}-\binom{n}{2}p\right|\leq\delta\right]\\
	& \leq & \frac{1}{\binom{n}{2}p-\delta}.
\end{eqnarray*}
Then we combine the two upper bounds together and obtain
\[\e\left[\frac{1}{E_{S}}\middle\vert E_{S}>0\right]\leq 
\frac{1}{\binom{n}{2}p-\delta}+\frac{2e^{-2\delta^{2}/\binom{n}{2}}}{1-\p\left[E_{S}=0\right]}.
\]
Let $\delta=n^{3/2}$, we have
\begin{eqnarray*}
	&  & \e\left[\hat{N}_{NS}|E_{S}>0\right]\\
	& \leq & n\left(\frac{n(N-n)p}{2}\left(\frac{1}{\binom{n}{2}p-\delta}+\frac{2e^{-2\delta^{2}/\binom{n}{2}}}{1-\p\left[E_{S}=0\right]}\right)+1\right)\\
	& = & n\left(\frac{n(N-n)p}{2}\left(\frac{1}{\binom{n}{2}p-n^{3/2}}+\frac{2e^{-2n^{3}/\binom{n}{2}}}{1-\p\left[E_{S}=0\right]}\right)+1\right)\\
	& \to & N,
\end{eqnarray*}
as $n$ goes to $\infty$. 
Recall that we have shown that $$\e\left[\hat{N}_{NS}|E_{S}>0\right]>N.$$
Thus 
$
\lim_{n\to\infty}\e\left[\hat{N}_{NS}|E_{S}>0\right]=N.
$
Hence we conclude that $\hat{N}_{NS}$ is asymptotically unbiased.

\subsection*{Proof of Theorem \ref{thm:regularity}}
We notice that 
$
\sum_{y}L(\tn,y;\d,p)=L(\tn;\d,p).
$
This is a finite sum since $\tilde{d}(v)$ is a known and fixed integer
and thus the total number of possibilities of integer composition
of $\tilde{d}(v)$ is finite. Therefore, it suffices to show the regularity
results for the joint posterior $\int L(\tn,y;\d,p)\pi(\tn,p)dp$
for every fixed $y$. Hereinafter, we fix an arbitrary $y$. We have
\begin{align*}
	& L_{\neg W}(\d;p,y) = \prod_{v\in W}\prod_{i=1}^{K}\binom{\tn_{i}}{y_{i}(v)}p_{i,t(v)}^{y_{i}(v)}(1-p_{i,t(v)})^{\tn_{i}-y_{i}(v)}\\
	= & \zeta(\tn)\prod_{j=1}^{K}\prod_{i=1}^{K}\prod_{v\in W,t(v)=j}p_{ij}^{y_{i}(v)}(1-p_{ij})^{\tn_{i}-y_{i}(v)}\\
	= & \zeta(\tn)\prod_{j=1}^{K}\prod_{i=1}^{K}p_{ij}^{S_{ji}}(1-p_{ij})^{M_{ji}}\\
	= & \zeta(\tn)\prod_{1\leq i<j\leq K}p_{ij}^{S_{[ij]}}(1-p_{ij})^{M_{[ij]}}\prod_{i=1}^{K}p_{ii}^{S_{ii}}(1-p_{ii})^{M_{ii}},
\end{align*}
where $\zeta(\tn)=\prod_{v\in W}\prod_{i=1}^{K}\binom{\tn_{i}}{y_{i}(v)}$,
$S_{ji}=\sum_{v\in W,t(v)=j}y_{i}(v)$, $M_{ji}=\sum_{v\in W,t(v)=j}(\tn_{i}-y_{i}(v))=\tn_{i}V_{j}-S_{ji}$,
$S_{[ij]}=S_{ij}+S_{ji}$ and $M_{[ij]}=M_{ij}+M_{ji}$. 
Furthermore, we have
\begin{align*}
	& L_{W}(\d;p)L_{\neg W}(\d;p,y)\psi(p)/\zeta(\tn)\\
	\propto & \left(\prod_{1\leq i<j\leq K}p_{ij}^{E_{ij}+S_{[ij]}+\alpha_{ij}}(1-p_{ij})^{V_{i}V_{j}-E_{ij}+M_{[ij]}+\beta_{ij}}\right)\times\\
	& \left(\prod_{i=1}^{K}p_{ii}^{E_{ii}+S_{ii}+\alpha_{ii}}(1-p_{ii})^{\binom{V_{i}}{2}-E_{ii}+M_{ii}+\beta_{ii}}\right)
\end{align*}
Therefore
\begin{align*}
	& \p(\tn,y|\d) \propto \int L(\tn,y;\d,p)\pi(\tn,p)dp\\
	= & \int L_{W}(\d;p)L_{\neg W}(\d;p,y)\phi(\tn)\psi(p)dp\\
	= & \zeta(\tn)\phi(\tn)\int L_{W}(\d;p)L_{\neg W}(\d;p,y)\psi(p)/\zeta(\tn)dp\\
	\propto & \zeta(\tn)\phi(\tn)\int\left(\prod_{i=1}^{K}p_{ii}^{E_{ii}+S_{ii}+\alpha_{ii}}(1-p_{ii})^{\binom{V_{i}}{2}-E_{ii}+M_{ii}+\beta_{ii}}\right)\times\\
	& \left(\prod_{1\leq i<j\leq K}p_{ij}^{E_{ij}+S_{[ij]}+\alpha_{ij}}(1-p_{ij})^{V_{i}V_{j}-E_{ij}+M_{[ij]}+\beta_{ij}}\right)dp,\\
	= & \zeta(\tn)\phi(\tn)\prod_{i=1}^{K}\b\left(E_{ii}+S_{ii}+\alpha_{ii},\binom{V_{i}}{2}-E_{ii}+M_{ii}+\beta_{ii}\right)\times\\
	& \prod_{1\leq i<j\leq K}\b\left(E_{ij}+S_{[ij]}+\alpha_{ij},V_{i}V_{j}-E_{ij}+M_{[ij]}+\beta_{ij}\right).
\end{align*}
We notice that $M_{[ij]}=M_{ij}+M_{ji}=\tn_{i}V_{j}-S_{ji}+\tn_{j}V_{i}-S_{ij}=\tn_{i}V_{j}+\tn_{j}V_{i}-S_{[ij]}$.
Let $\eta_{ii}=E_{ii}+S_{ii}+\alpha_{ii}$ and $\eta_{ij}=E_{ij}+S_{[ij]}+\alpha_{ij}$
for $i\neq j$ and let $\theta_{ii}=\binom{V_{i}}{2}-E_{ii}-S_{ii}+\beta_{ii}$
and $\theta_{ij}=V_{i}V_{j}-E_{ij}-S_{[ij]}+\beta_{ij}$ for $i<j$.
Thus we have 
\[
\p(\tn,y|\d)  \propto  \zeta(\tn)\phi(\tn)\prod_{i=1}^{K}\b(\eta_{ii},\theta_{ii}+\tn_{i}V_{i})\times
\prod_{1\leq i<j\leq K}\b(\eta_{ij},\theta_{ij}+\tn_{i}V_{j}+\tn_{j}V_{i});
\]
i.e., \[\p(\tn,y|\d)=C\zeta(\tn)\phi(\tn)\prod_{i=1}^{K}\b(\eta_{ii},\theta_{ii}+\tn_{i}V_{i})\times
\prod_{1\leq i<j\leq K}\b(\eta_{ij},\theta_{ij}+\tn_{i}V_{j}+\tn_{j}V_{i})
\]
for some fixed constant $C$.

In \cite{wendel1948note}, Wendel proved 
$
\lim_{y\to+\infty}\frac{\Gamma(x+y)}{y^{x}\Gamma(y)}=1,
$
for real $x$ and $y$. Therefore, there exists a number $\nu(x)$
such that $\forall y>\nu(x)$, we have $\frac{2\Gamma(x+y)}{y^{x}\Gamma(y)}\geq1;$
i.e., 
$
2y^{-x}\geq\frac{\Gamma(y)}{\Gamma(x+y)}.
$
This is equivalent to
$
2\Gamma(x)y^{-x}\geq\b(x,y).
$
Suppose that $\tn_{i}$'s are sufficiently large, i.e., $\theta_{ii}+\tn_{i}V_{i}>\nu(\eta_{ii})$
for all $i$ and $\theta_{ij}+\tn_{i}V_{j}+\tn_{j}V_{i}>\nu(\eta_{ij})$
for all $i<j$. In this case, we have
\begin{align*}
	& \p(\tn,y|\d) \leq C_{1}\zeta(\tn)\phi(\tn)\prod_{i=1}^{K}\Gamma(\eta_{ii})(\theta_{ii}+\tn_{i}V_{i})^{-\eta_{ii}}\times\\
	& \prod_{1\leq i<j\leq K}\Gamma(\eta_{ij})(\theta_{ij}+\tn_{i}V_{j}+\tn_{j}V_{i})^{-\eta_{ij}}\\
	= & C_{2}\zeta(\tn)\phi(\tn)\prod_{i=1}^{K}(\theta_{ii}+\tn_{i}V_{i})^{-\eta_{ii}}\times \\
	& \prod_{1\leq i<j\leq K}(\theta_{ij}+\tn_{i}V_{j}+\tn_{j}V_{i})^{-\eta_{ij}}\\
	\leq & C_{3}\zeta(\tn)\phi(\tn)\prod_{i=1}^{K}(\theta_{ii}+\tn_{i}V_{i})^{-\eta_{ii}}\times\\
	& \prod_{1\leq i<j\leq K}(\theta_{ij}+2\tn_{i}V_{j})^{-\eta_{ij}}(\theta_{ij}+2\tn_{j}V_{i})^{-\eta_{ij}}\\
	\leq & C_{3}\zeta(\tn)\phi(\tn)\prod_{i=1}^{K}(\theta_{min}+\tn_{i}V_{min})^{-\eta_{ii}}\times\\
	& \prod_{1\leq i<j\leq K}(\theta_{min}+\tn_{i}V_{min})^{-\eta_{ij}}(\theta_{min}+2\tn_{j}V_{min})^{-\eta_{ij}}\\
	\leq & C_{4}\zeta(\tn)\phi(\tn)\prod_{i=1}^{K}\tn_{i}^{-\eta_{ii}}\prod_{1\leq i<j\leq K}\tn_{i}^{-\eta_{ij}}\tn_{j}^{-\eta_{ij}},
\end{align*}
where $C_{1}=2^{K+\binom{K}{2}}C$, $C_{2}=C_{1}\prod_{i=1}^{K}\Gamma(\eta_{ii})\prod_{1\leq i<j\leq K}\Gamma(\eta_{ij})$,
$C_{3}=C_{2}/2^{\binom{K}{2}}$, $\theta_{min}=\min\{\theta_{ij}:1\leq i\leq j\leq K\}$,
$V_{min}=\min\{V_{ij}:1\leq i\leq j\leq K\}$, and the last inequality
holds for some fixed constant $C_{4}$ and sufficiently large $\tn_{i}$'s.
Now consider the term $\zeta(\tn)$. We have
\[
\zeta(\tn)=\prod_{v\in W}\prod_{i=1}^{K}\binom{\tn_{i}}{y_{i}(v)}\leq\prod_{v\in W}\prod_{i=1}^{K}\tn_{i}^{y_{i}(v)}
=\prod_{i=1}^{N}\tn_{i}^{\sum_{v\in W}y_{i}(v)}=\prod_{i=1}^{N}\tn_{i}^{\sum_{j=1}^{K}S_{ji}}.
\]
Therefore,
\begin{align*}
	&  \p(\tn,y|\d) \leq C_{4}\zeta(\tn)\phi(\tn)\prod_{i=1}^{K}\tn_{i}^{-\eta_{ii}}\prod_{i\neq j}\tn_{i}^{-\eta_{ij}/2}\tn_{j}^{-\eta_{ij}/2}\\
	= & C_{4}\zeta(\tn)\phi(\tn)\prod_{i=1}^{K}\tn_{i}^{-\eta_{ii}}\prod_{i=1}^{K}\tn_{i}^{-\sum_{j\neq i}\eta_{ij}}\\
	= & C_{4}\phi(\tn)\prod_{i=1}^{K}\tn_{i}^{\sum_{j=1}^{K}(S_{ji}-\eta_{ji})}\\
	= & C_{4}\phi(\tn)\prod_{i=1}^{K}\tn_{i}^{-\sum_{j=1}^{K}(E_{ij}+S_{ij}+\alpha_{ij})}\\
	\leq & C_{4}\phi(\tn)\prod_{i=1}^{K}\tn_{i}^{-\sum_{j=1}^{K}(E_{ij}+\alpha_{ij})}.
\end{align*}
Now we consider the $n$-th moment of $N$. Let $$\mu_{n}=N^{n}\p(\tn,y|\d).$$
We have
\begin{eqnarray*}
	\mu_{n} & \leq & C_{4}\phi(\tn)N^{n}\prod_{i=1}^{K}\tn_{i}^{-\sum_{j=1}^{K}(E_{ij}+\alpha_{ij})}\\
	& \leq & C_{5}\phi(\tn)\sum_{i=1}^{K}(\tn_{i}+|W|/K)^{n}\prod_{i=1}^{K}\tn_{i}^{-\sum_{j=1}^{K}(E_{ij}+\alpha_{ij})}
\end{eqnarray*}
where $C_{5}=C_{4}K^{n-1}$. There exists a constant $C_{6}$ such
that for sufficiently large $\tn_{i}$'s, we have $\tn_{i}+|W|/K\leq C_{6}\tn_{i}$
for $\forall1\leq i\leq K$. Therefore, 
\begin{eqnarray*}
	\mu_{n} & \leq & C_{5}C_{6}^{n}\phi(\tn)\sum_{i=1}^{K}\tn_{i}^{n}\prod_{i=1}^{K}\tn_{i}^{-\sum_{j=1}^{K}(E_{ij}+\alpha_{ij})}\\
	& = & C_{7}\phi(\tn)\sum_{i=1}^{K}\tn_{i}^{n}\prod_{i=1}^{K}\tn_{i}^{-\sum_{j=1}^{K}(E_{ij}+\alpha_{ij})}\\
	& \leq & KC_{7}\phi(\tn)\prod_{i=1}^{K}\tn_{i}^{-(\sum_{j=1}^{K}(E_{ij}+\alpha_{ij})-n)}\\
	& \leq & C_{8}\phi(\tn)(\prod_{i=1}^{K}\tn_{i})^{-(\lambda-n)},
\end{eqnarray*}
where $C_{7}=C_{5}C_{6}^{n}$ and $C_{8}=KC_{7}$. Since $\phi(\tn)$
is bounded, $\sum_{\tn}\mu_{n}<\infty$ if $\lambda>n+1$.

\section*{Proposal for $y(v)$ and Its Proposal Ratio}


The proposal algorithm for $y(v)$ is presented in Algorithm~\ref{alg:Proposal-algorithm-for-y}.

\begin{algorithm}[hbt]
	\algsetup{linenosize=\small}
	\small
	\begin{multicols}{2}
		\begin{algorithmic}[1]
			
			\ENSURE The state in the previous round, $y(v)^{(\t-1)}$
			
			\REQUIRE The proposed new state, $y(v)^{*}$
			
			\STATE $y(v)^{*}\gets y(v)^{(\t-1)}$
			
			\LOOP
			
			\STATE Sample two distinct $i,j\in[1,K]\cap\mathbb{N}$ uniformly
			at random
			
			\IF{$y_{i}(v)^{*}>0$ \AND $y_{j}(v)^{*}<\tn_{j}$}
			
			\STATE Exit loop
			
			\ENDIF
			
			\ENDLOOP
			
			\STATE $y_{i}(v)^{*}\gets y_{i}(v)^{*}-1$
			
			\STATE $y_{j}(v)^{*}\gets y_{j}(v)^{*}+1$%
		\end{algorithmic}
	\end{multicols}
	
	%
	%
	%
	%
	%
	%
	%
	%
	
	\caption{Proposal algorithm for $y(v)$\label{alg:Proposal-algorithm-for-y}}
\end{algorithm}

Theorem~\ref{thm:proposal-ratio} shows the proposal ratio of Algorithm \ref{alg:Proposal-algorithm-for-y}.
\begin{thm}
	\label{thm:proposal-ratio}Let
	$
	\av(y(v))=\sum_{i=1}^{K}\sum_{j=1}^{K}1\{i\neq j,y_{i}(v)>0,y_{j}(v)<\tn_{j}\}.
	$
	The proposal ratio of Algorithm \ref{alg:Proposal-algorithm-for-y}
	is given by
	$
	\frac{h_{v}(y(v)^{*}\to y(v)^{(\t-1)})}{h_{v}(y(v)^{(\t-1)}\to y(v)^{*})}  =  \frac{1/\av(y(v)^{*})}{1/\av(y(v)^{(\t-1)})}
	=  \frac{\av(y(v)^{(\t-1)})}{\av(y(v)^{*})}.
	$
\end{thm}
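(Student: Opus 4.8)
The plan is to show that the stochastic loop in Algorithm \ref{alg:Proposal-algorithm-for-y} draws a valid single-unit transfer uniformly at random, and then to pair each forward move with its unique reverse move. First I would characterize what a move does: sampling an ordered pair $(i,j)$ with $i\neq j$ and then setting $y_i(v)^*\gets y_i(v)^*-1$ and $y_j(v)^*\gets y_j(v)^*+1$ amounts to adding the displacement $-e_i+e_j$ to the current configuration. The acceptance test $y_i(v)^*>0$ and $y_j(v)^*<\tn_j$ is exactly the condition that this transfer keeps every coordinate within its admissible range $0\le y_i(v)\le\tn_i$ while preserving $\sum_i y_i(v)=\td(v)$. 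The ordered pairs passing this test are precisely the $\av(y(v)^{(\t-1)})$ pairs counted by the indicator sum in the definition of $\av$.

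Second, I would argue that the rejection loop produces a uniform draw over these admissible pairs. Each iteration samples an ordered pair uniformly among the $K(K-1)$ possibilities and exits only when the pair is admissible; conditioning on exit therefore yields the uniform distribution on the admissible set, so each admissible pair is selected with probability $1/\av(y(v)^{(\t-1)})$. Because the displacement $-e_i+e_j$ determines $(i,j)$ uniquely (the negative coordinate is $i$, the positive one is $j$), distinct admissible pairs yield distinct proposals, so there are no collisions diluting this probability. Hence the probability of proposing a particular neighbor is exactly $h_v(y(v)^{(\t-1)}\to y(v)^*)=1/\av(y(v)^{(\t-1)})$.

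Third, I would verify that the reverse move is available: the transition from $y(v)^*$ back to $y(v)^{(\t-1)}$ is realized by the transposed pair $(j,i)$, and this pair is admissible at $y(v)^*$. Indeed, after the forward move $y_j(v)^*=y_j(v)^{(\t-1)}+1>0$, so coordinate $j$ may be decremented, and $y_i(v)^*=y_i(v)^{(\t-1)}-1<\tn_i$ (using $y_i(v)^{(\t-1)}\le\tn_i$), so coordinate $i$ may be incremented. By the same uniformity argument applied at $y(v)^*$, the reverse proposal has probability $h_v(y(v)^*\to y(v)^{(\t-1)})=1/\av(y(v)^*)$, and dividing the two expressions gives the claimed ratio $\av(y(v)^{(\t-1)})/\av(y(v)^*)$.

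The one point requiring care — and essentially the only obstacle — is pinning down the uniformity claim together with the injectivity of the pair-to-proposal map, so that each proposal really carries probability $1/\av$ rather than some aggregated value, and confirming that the loop terminates almost surely, which holds whenever $\av(y(v)^{(\t-1)})>0$, i.e. whenever at least one feasible transfer exists. Everything else is routine bookkeeping with the two validity conditions.
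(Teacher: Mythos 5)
Your proof is correct and follows essentially the same route as the paper's: identify the admissible ordered pairs as the set counted by $\av(\cdot)$, observe the rejection loop yields a uniform draw over that set, and conclude each proposal has probability $1/\av$. Your additional checks (injectivity of the pair-to-state map and admissibility of the reverse pair $(j,i)$) are points the paper leaves implicit, and they are verified correctly.
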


\begin{proof}
	
	Let $A=\{(i,j)\in([1,K]\cap\mathbb{N})^{2}:i\neq j,y_{i}(v)>0,y_{j}(v)<\tn_{j}\}$
	be the space of all pairs of $i$ and $j$ that satisfy the condition
	of exiting the loop in Algorithm \ref{alg:Proposal-algorithm-for-y}.
	And each pair of $(i,j)$ in $A$ correspond to a new state with $y_{i}(v)$
	subtracted by $1$ and $y_{j}(v)$ added by $1$. The size of $A$
	is given by $\av(y(v))$. The proposal algorithm proposes each new
	state uniformly at random. Thus 
	$
	h_{v}(y(v)^{*}\to y(v)^{(\t-1)})=\frac{1}{\av(y(v)^{*})}.
	$
	Similarly,
	$
	h_{v}(y(v)^{(\t-1)}\to y(v)^{*})=\frac{1}{\av(y(v)^{(\t-1)})}.
	$
	This completes the proof.
\end{proof}

\end{document}